%File: anonymous-submission-latex-2025.tex
\documentclass[letterpaper]{article} % DO NOT CHANGE THIS
\usepackage{aaai25}  % DO NOT CHANGE THIS
\usepackage{times}  % DO NOT CHANGE THIS
\usepackage{helvet}  % DO NOT CHANGE THIS
\usepackage{courier}  % DO NOT CHANGE THIS
\usepackage[hyphens]{url}  % DO NOT CHANGE THIS
\usepackage{graphicx} % DO NOT CHANGE THIS
\urlstyle{rm} % DO NOT CHANGE THIS
  % DO NOT CHANGE THIS
\usepackage{natbib}  % DO NOT CHANGE THIS AND DO NOT ADD ANY OPTIONS TO IT
\usepackage{caption} % DO NOT CHANGE THIS AND DO NOT ADD ANY OPTIONS TO IT
\frenchspacing  % DO NOT CHANGE THIS
\setlength{\pdfpagewidth}{8.5in} % DO NOT CHANGE THIS
\setlength{\pdfpageheight}{11in} % DO NOT CHANGE THIS
\usepackage{soul}
\usepackage[utf8]{inputenc}
\usepackage{amsmath}
\usepackage{amsthm}
\usepackage{amssymb}

\usepackage{booktabs}
\usepackage[american]{babel}
\usepackage{microtype}
\usepackage[switch]{lineno}
\usepackage{multirow}
\usepackage{color}
\usepackage{bbding}
\usepackage[x11names]{xcolor}

\usepackage[caption=false,font=footnotesize,labelfont=rm,textfont=rm]{subfig}
\usepackage[american]{babel}
\usepackage{microtype}
\usepackage{pifont}

\usepackage{algorithm}
\usepackage{algorithmic}

\newtheorem{theorem}{Theorem}

\usepackage{bm}
\newcommand{\x}{{\bm x}}
\newcommand{\y}{{\bm y}}
\newcommand{\z}{{\bm z}}

\newcommand{\C}{{\bm c}}

\newcommand{\T}{{\top}}

\newcommand{\ie}{\textrm{i.e.}}

\newcommand{\eg}{\textrm{e.g.}}
\newcommand{\wrt}{\textrm{w.r.t.\,}}

\usepackage{newfloat}
\usepackage{listings}
\DeclareCaptionStyle{ruled}{labelfont=normalfont,labelsep=colon,strut=off} % DO NOT CHANGE THIS
\lstset{%
	basicstyle={\footnotesize\ttfamily},% footnotesize acceptable for monospace
	numbers=left,numberstyle=\footnotesize,xleftmargin=2em,% show line numbers, remove this entire line if you don't want the numbers.
	aboveskip=0pt,belowskip=0pt,%
	showstringspaces=false,tabsize=2,breaklines=true}
\floatstyle{ruled}
\newfloat{listing}{tb}{lst}{}
\floatname{listing}{Listing}

\pdfinfo{
/TemplateVersion (2025.1)
}

\setcounter{secnumdepth}{2} 
\title{Hierarchical Consensus Network for Multiview Feature Learning}
\author{
Chengwei Xia\textsuperscript{\rm 1}, 
Chaoxi Niu\textsuperscript{\rm 2}, 
Kun Zhan\textsuperscript{\rm 1}\thanks{Corresponding Author.}
}
\affiliations{
\textsuperscript{\rm 1}School of Information Science and Engineering, Lanzhou University\\
\textsuperscript{\rm 2} Australian Artificial Intelligence Institute, University of Technology Sydney \\
https://github.com/kunzhan/HCN
}

\begin{document}
\maketitle
\begin{abstract}
Multiview feature learning aims to learn discriminative features by integrating the distinct information in each view. However, most existing methods still face significant challenges in learning view-consistency features, which are crucial for effective multiview learning. Motivated by the theories of CCA and contrastive learning in multiview feature learning, we propose the hierarchical consensus network (HCN) in this paper. The HCN derives three consensus indices for capturing the hierarchical consensus across views, which are classifying consensus, coding consensus, and global consensus, respectively. Specifically, classifying consensus reinforces class-level correspondence between views from a CCA perspective, while coding consensus closely resembles contrastive learning and reflects contrastive comparison of individual instances. Global consensus aims to extract consensus information from two perspectives simultaneously. By enforcing the hierarchical consensus, the information within each view is better integrated to obtain more comprehensive and discriminative features. The extensive experimental results obtained on four multiview datasets demonstrate that the proposed method significantly outperforms several state-of-the-art methods.
\end{abstract}
\begin{figure}[!t]
\centering
\includegraphics[width=0.78\linewidth]{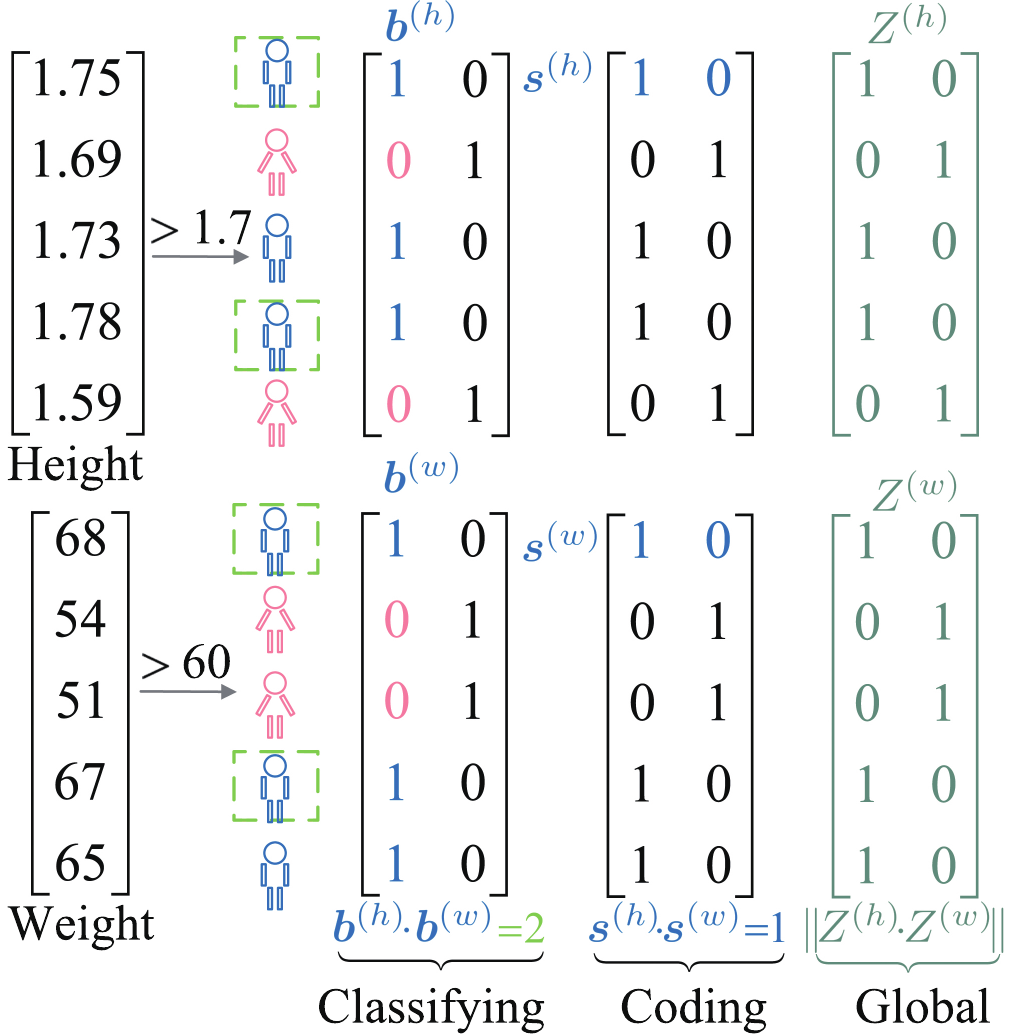}%
\caption{The hierarchical consensus: Consider a scenario with five students, where we collect their height and weight. We employ two simple classifiers: one based on height (assigning `boy' if height $>$1.7m and `girl'; otherwise) and another based on weight (assigning `boy' if weight $>$60kg and `girl' otherwise). Each view produces binary predictions. Our hierarchical consensus objective involves deriving consensus indices through inner product computations. (1) \textbf{Classifying Consensus}: In the first column of the matrix, $\bm{b}^{(h)}$ and $\bm{b}^{(w)}$ denote boys' predictions from the two views. The result of 2 counts the number of boys. The objective of classifying consensus is to align predictions for gender quantity. (2) \textbf{Coding Consensus}: Moving to the first row of the second matrix, $\bm{s}^{(h)}$ and $\bm{s}^{(w)}$ represent the gender coding for a same student. The result of 1 indicates consensus prediction of the student. The objective of coding consensus is to align gender coding of the student. (3) \textbf{Global Consensus}: To the whole matrix perspective, a global consensus is established.}
\label{moti}
\end{figure}

\section{Introduction}
Multiview data are universal in the real world and typically contain multiple views of the same underlying semantic information. Generally, a single view cannot provide adequate information for feature learning. To address this, multiview feature learning aims to integrate the common and complementary information contained in multiple views to generate more discriminative data features than a single view~\cite{chen2023deep,Xu_2022_CVPR}.   

Multiview feature learning methods are roughly divided into two categories, \ie, traditional and deep methods. However, traditional methods typically suffer from limited representation capacity, and high computational complexity for complex data scenarios. To alleviate these problems, many works propose to perform multiview feature learning based on deep neural networks and achieve superior performance compared to the traditional methods. 

Most deep learning methods focus on learning consistency between different view features to obtain a common representation through concatenation or adaptive weighted fusion~\cite{trostenMVC,yang2023dealmvc,xu2024investigating}. Canonical Correlation Analysis (CCA)~\cite{hotelling1936relations} and co-training~\cite{blum1998combining} are two representative methods in consistency learning, achieving promising results in exploring common features across views~\cite{andrew2013deep,wang2015deep,Lin2022}.
As a promising unsupervised representation learning method, contrastive learning has also gained increasing attention \cite{chen2020simple,he2020momentum}, and several multiview feature learning methods based on contrastive learning have been proposed, \eg, \cite{trostenMVC} directly conducts alignment-based instance-level contrast across multiple views. ~\cite{chen2023deep,Xu_2022_CVPR} rely on additional components such as an MLP projector to obtain a cluster assignment probability, and achieve consistency by contrasting the cluster assignments across views. Moreover, ~\cite{yan2023gcfagg,xu2024self} propose a selection of negative pairs and reweighting strategies to improve contrastive learning performance under multiview scenarios, resulting in additional costs.

In this paper, building on the insight of CCA and contrastive learning in multiview feature learning, we propose the hierarchical consensus network (HCN) which derives three consensus indices to explore hierarchical consensus between views, \ie, classifying, coding, and global consensus. In Figure~\ref{moti}, our motivation is clearly and intuitively presented, making it easy to see the connections among HCN, CCA~\cite{hotelling1936relations}, and contrastive learning. The objective of classifying consensus is conceptually similar to CCA, while the objective of coding consensus closely resembles contrastive learning. The objective of global consensus aims to capture consensus information from two perspectives simultaneously. Specifically, classifying consensus aims to reinforce class-level correspondence between views. Coding consensus is proposed to achieve contrastive comparison of individual instances between different views. Global consensus minimizes the difference between the different views. Overall, HCN fully characterizes the consensus between views from different perspectives.

To apply the proposed HCN for multiview learning, we employ a view-specific autoencoder to learn the distinct and common information within each view. Besides, we further apply the augmentation technique to increase the training samples for each view. Original and augmented data are fed into the view-specific encoder to obtain latent features. Then, hierarchical consensus learning is conducted on the latent features of the original and the augmented data across views. Specifically, for classifying consensus learning, we minimize the conditional entropy of the class probability in one view conditioned by the other view. For the coding consensus, we employ a weak-to-strong
pseudo-supervision cross-entropy between the original and the augmented data in each view. For the global consensus, we minimize the difference between the two latent features. Compared to state-of-the-art (SOTA) methods, HCN explores consensus information in multiview feature learning from novel insights, with a lower complexity, and does not require the introduction of additional components or sample selection and weighting strategies.

The main contributions are summarized as follows: \ding{182} We introduce hierarchical consensus to explore multiple consistencies between views, providing promising insights into multiview feature learning. \ding{183} We design the Hierarchical Consensus Network (HCN) for multiview feature learning. HCN effectively learns a comprehensive and discriminative feature by capturing hierarchical consensus among multiple views. \ding{184} Experiments on four multiview datasets demonstrate the effectiveness of HCN over other SOTA baselines. 
\begin{figure*}[!t]
\centering
\includegraphics[width=0.96\textwidth]{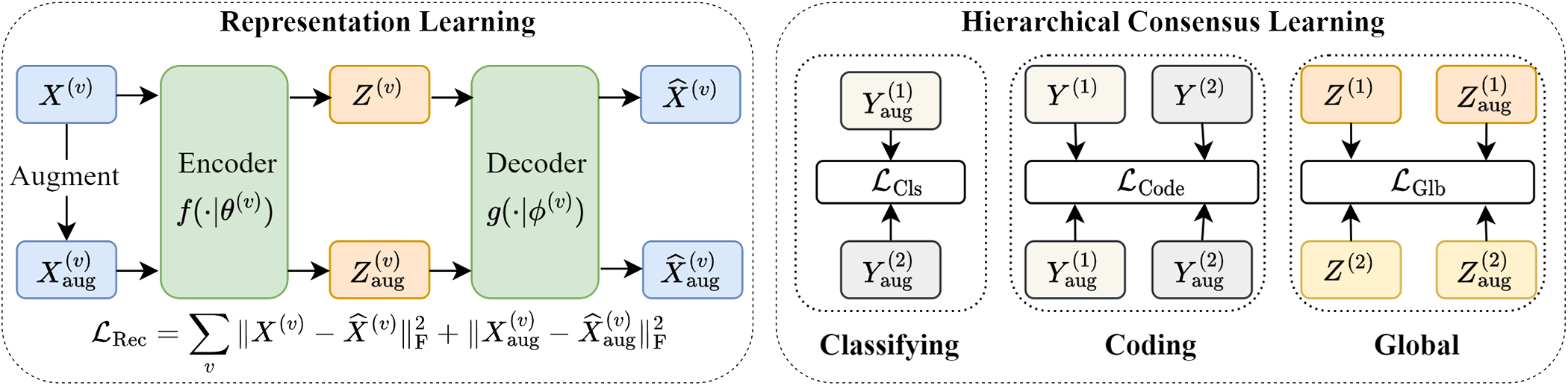}
\caption{The HCN framework. Each view contains a view-specific autoencoder, \ie, an encoder $f(\cdot|\theta^{(v)})$ and decoder $g(\cdot|\phi^{(v)})$. The representations $Z^{(v)}$ and $Z^{(v)}_{\text{aug}}$ are learned by minimizing the construction error $\mathcal{L}_{\text{Rec}}$. Besides, $Z^{(v)}$ and $Z^{(v)}_{\text{aug}}$ are fed into the softmax to obtain the class posterior probabilities, \ie, $Y^{(v)}$ and $Y^{(v)}_{\text{aug}}$. Given the representations and class probabilities of the two views, HCN aims to capture hierarchical consensus between them. Specifically, classifying consensus ensures the consistency of class distributions across views, coding consensus enhances the same prediction for the same sample, and global consensus minimizes the difference between learned representations from different views.}
\label{framework2}
\end{figure*}

\section{Related Work}
Most multiview feature learning methods suffer from drawbacks such as high complexity and limited performance~\cite{chen2021multiview,zhan2018multiview,ijcai2019p510,liu2020efficient}. Recently, several consistency-based multiview feature learning methods have been proposed, aiming to maximize consistency between different views. Inspired by the strategy to maximize view consistency between two sets in CCA~\cite{hotelling1936relations}, \cite{andrew2013deep} maps multiview features into a common space and concatenates the low-dimensional features as the common representation. ~\cite{wang2015deep} further introduce autoencoders in multiview feature learning compared with~\cite{andrew2013deep}.
Besides, by leveraging co-training strategy~\cite{blum1998combining}, \cite{Lin2022} design dual contrastive prediction to learn the cross-view consistency.

As one of the most effective consistent learning paradigms, contrastive learning has achieved SOTA performance~\cite{chen2020simple,he2020momentum}. The basic idea of contrastive learning is learning a feature space from raw data by maximizing the similarity between positive pairs while minimizing that of negative pairs. In recent, some methods have shown the success of contrastive learning in multiview feature learning~\cite{trostenMVC,Xu_2022_CVPR}, where similarities of positive pairs are maximized and that of negative pairs are minimized via NT-Xent~\cite{chen2020simple}. \cite{trostenMVC} learns common representation by aligning representations from different views at the sample level. ~\cite{yan2023gcfagg} learns the global structural relationships between samples and utilizes them to obtain consistent data representations. Simultaneously, structural information is utilized to select negative pairs for cross-view contrastive learning. In~\cite{xu2024self}, the weights are optimized based on the discrepancy between pairwise representations, performing self-weighted contrastive learning. Considering consistency between the cluster assignments among multiple views, \cite{chen2023deep} proposes a cross-view contrastive learning method to learn view-invariant representations by contrasting cluster assignments among multiple views. Moreover, contrastive clustering~\cite{li2021contrastive} designed for single-view clustering tasks, constructs two distinct views through data augmentation and subsequently projects them into feature space. Using two separate projection heads, the method conducts contrastive learning at different levels in the row and column space to jointly learn representations and cluster assignments. \cite{Xu_2022_CVPR} conducts multi-level features contrast from multiple views to achieve consistency. Furthermore, some recent contrastive learning works, notably BYOL~\cite{grill2020bootstrap} and SimSiam~\cite{chen2021exploring}, have shown the remarkable ability to learn powerful representations using only positive pairs, which has proven to be a simple and effective method~\cite{Tian2021}.

\section{Hierarchical Consensus Learning}
We incorporate a hierarchical structure by defining objectives at the matrix, row, and column levels: maximizing the global consensus between the matrices of dual-view learned features aligns the overall representations; maximizing the pairwise coding consensus between rows enables the contrastive comparison of individual instances; and maximizing the classifying consensus between columns aligns class-level representations, similar CCA.
% In this section, we introduce how to conduct hierarchical consensus learning between two views for brevity. Note that the proposed method is naturally extended to multiview settings. 
% Then, we prove the connection between consensus learning and contrastive learning.
% outputted by the view-specific and non-linear encoders. 
% Suppose the two view features are $Z^{(1)}$ and $Z^{(2)}$, we enhance the consensus between them from different perspectives and 
\subsection{Notations}
% In the multiview feature learning setting,
A multiview dataset $\mathcal{X} =\{X^{(1)},X^{(2)},\ldots,X^{(n_v)}\}$ typically consists of $n$ samples across $n_v$ views. Specifically, the data of $v$-th view is represented as $X^{(v)}=[\x_1^{(v)},\x_2^{(v)},\ldots,\x_n^{(v)}]^\T\in\mathbb{R}^{n\times d_v}$, where $\x^{(v)}$ denotes the samples with $d_v$ dimensional raw feature. Since each view contains distinct information about the dataset, we use $f(\cdot|\theta^{(v)})$ to denote nonlinear mappings implemented by the encoder for view $v$, with the corresponding parameters $\theta^{(v)}$. Let $Z^{(v)} = [\z_1^{(v)},\z_2^{(v)},\ldots,\z_n^{(v)}]^\T \in \mathbb{R}^{n\times k}$ denote the learned feature from the input in view $v$ by $f(X^{(v)}|\theta^{(v)})$ where $k$ denotes the feature dimension. We aim to capture hierarchical consensus between the learned features from multiple views in an unsupervised way and generate a more comprehensive and discriminative feature for $\mathcal{X}$.
\subsection{Classifying Consensus Learning}
We begin with classifying consensus learning between $Z^{(1)}$ and $Z^{(2)}$ which aims to enhance the consistency of class distribution between views. Taking the first view as an example, we model the class posterior probabilities $y_{ij}^{(1)}$ or $p(c_j^{(1)}|\x_i^{(1)})$ of sample $\x_{i}^{(1)}$ belonging to the $j$-th class by applying the softmax layer on $Z^{(1)}$: $y_{ij}^{(1)}=p(c_j^{(1)}|\x_i^{(1)})=\frac{\exp(z_{ij}^{(1)})}{\sum_j \exp(z_{ij}^{(1)})}$ where $z_{ij}^{(1)}$ represents the $j$-th entry of the $i$-th sample in $Z^{(1)}$. In the same way, we obtain $y_{ij}^{(2)}$ or $p(c_j^{(2)}|\x_i^{(2)})$ based on $Z^{(2)}$ for the second view.

Referring to the similarity kernel function defined in~\cite{Haussler1999} and~\cite{BishopPRML}, the joint probability of sample belonging to distinct classes is given by
\begin{align}
\widetilde p(c_{j},c_{h})=\int
p(c_{j}|\x_i)
p(c_{h}|\x_i)p(\x_i){\rm d}\x_i
\label{kernel}
\end{align}
where $c_j$ and $c_h$ denote different classes. Similar to the classifying consensus shown in Figure~\ref{moti}, we define classifying consensus probability. By extending Eq.~\eqref{kernel} to the discrete scenario and multiview learning, the joint class probability of discrete random variables between two different views~\cite{ji2019invariant,lin2021completer} is defined by
\begin{align}
\widetilde p(c_{j}^{(1)},c_{h}^{(2)})
=\frac{1}{n}\sum_{i=1}^n y_{ij}^{(1)}y_{ih}^{(2)}\,,\label{column}
\end{align}
Eq.~\eqref{column} performs a column-wise inner product between cross-view matrices $Y^{(1)}=[y_{ij}^{(1)}]$ and $Y^{(2)}=[y_{ij}^{(2)}]$ to measure similarity, similar to CCA. This measure is normalized by
\begin{align}
p(c_{j}^{(1)},c_{h}^{(2)})
=\frac{\widetilde p(c_{j}^{(1)},c_{h}^{(2)})}{\sum_{j,h=1}^{k}\widetilde p(c_{j}^{(1)},c_{h}^{(2)})}\,.
\label{joint_p}
\end{align}
Based on the obtained cross-view joint probability $p(c_{j}^{(1)},c_{h}^{(2)})$, it is straightforward to derive $p(\C^{(1)},\C^{(2)})$, $p(\C^{(1)})$, $p(\C^{(2)})$, $p(\C^{(2)}|\C^{(1)})$, and $p(\C^{(1)}|\C^{(2)})$. 

For the classifying consensus, the softmax operation ensures independence between the prediction columns, satisfying the decorrelation constraint in CCA. Similar to the CCA objective of maximizing cross-view correlations, classifying consensus aims to align the predictions \wrt the same class between two views. To achieve this goal, we minimize the conditional entropy of the class probability of one view conditioned by that of the other view. In other words, if the classes of data points in one view are known, the uncertainty about the classes of corresponding data points in the other view is minimized by the conditional entropy. Then, the classifying consensus is formulated as the conditional entropy between two views:   
\begin{align}
\min {\rm H} (\C^{(1)}|\C^{(2)})+{\rm H} (\C^{(2)}|\C^{(1)})\,
\label{infor1}
\end{align}
where ${\rm H}(\cdot)$ denotes the entropy operator. However, directly optimizing Eq.~\eqref{infor1} induces the risk that all samples are assigned to a particular class. To address this issue, we introduce a regularization term to encourage the class distribution to be balanced in each view. Specifically, we propose to maximize the entropy of the class probability in each view as 
\begin{align}
\max {\rm H} (\C^{(1)})+{\rm H} (\C^{(2)})\,.
\label{infor2}
\end{align}
By combining Eqs.~\eqref{infor1} and \eqref{infor2}, we obtain the final objective for classifying consensus learning:
\begin{align}
\mathcal L_{\rm cls} 
= \alpha{\rm H}(\C^{(1)}|\C^{(2)}) 
- \beta {\rm H}(\C^{(1)})
- \gamma{\rm H}(\C^{(2)})
\end{align}
where $\alpha$, $\beta$ and $\gamma$ are trade-off hyperparameters. In this way, we achieve the classifying consensus between views and avoid assigning all samples to a particular class.
\subsection{Coding Consensus Learning}
Different views characterize the same sample from different perspectives with the semantics of the sample shared between views. Based on this, we further propose coding consensus learning. Formally, given the class posterior probability obtained from softmax layer, we employ the cross-entropy for coding consensus learning~\cite{Hanijcai2024}:
\begin{align}
\mathcal L_{\rm code}=-\sum_{i=1}^n\bigl((\y^{(2)}_{i})^\T\log \y^{(1)}_{i}+(\y^{(1)}_{i})^\T\log \y^{(2)}_{i}\bigr)
\label{ic}
\end{align}
where $\y$ is the softmax prediction $\x$.
\subsection{Global Consensus Learning}
In addition to classifying and coding consensus learning based on the posterior probabilities of data classes, global consensus learning aims to minimize the difference between the two latent features encoded from different views. Thus, the objective of global consensus is given by:
\begin{align}
\mathcal L_{\rm glb} = -{\rm tr}\Bigl(\bigl({Z^{(1)}}\bigr)^\T {Z}^{(2)} \Bigr)\, .\label{matrix}
\end{align}
where ${\rm tr}(\cdot)$ denotes the trace operation.
\subsection{Hierarchical Structure}
Eq.~\eqref{infor1} defines the column-wise classifying consensus learning objective, corresponding to the column vectors shown in Figure~\ref{moti}. Eq.~\eqref{ic} represents a row-wise contrastive comparison of individual instances, as indicated by the row vectors in Figure~\ref{moti}. Lastly, Eq.~\eqref{matrix} maximizes the global consensus, aligning overall representations between views, illustrated by the entire matrix in Figure~\ref{moti}.

The classifying consensus aligns column vectors from different views, similar to CCA, which maximizes cross-view correlations. This correlation maximization is achieved in Eq.~\eqref{column}. Additionally, the decorrelation constraint in each view, typical of CCA, is incorporated here through the softmax operation, which enforces independence across columns in each view. Thus, classifying consensus closely aligns with the principles of CCA.

By optimizing the coding consensus objective, we ensure that the coding assignment of sample $i$ remains consistent across views, in line with the definition of multiview data. The connection between coding consensus learning and contrastive learning is formally established in Theorem~\ref{the1}.
\begin{theorem}\label{the1}
Coding-consensus learning is equivalent to contrastive learning with positive pairs.
\end{theorem}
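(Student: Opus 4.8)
The plan is to rewrite the coding-consensus loss $\mathcal L_{\rm code}$ so as to expose an explicit \emph{alignment} term over positive pairs, and then to match that term against the positive-pair component of a standard contrastive objective (InfoNCE/NT-Xent), showing that the negative-pair terms are simply absent. First I would expand one summand of Eq.~\eqref{ic} using the softmax definition $y_{ij}^{(1)}=\exp(z_{ij}^{(1)})/\sum_l\exp(z_{il}^{(1)})$. Since $\sum_j y^{(2)}_{ij}=1$, this gives
\begin{align}
-(\y^{(2)}_i)^\T\log\y^{(1)}_i
= -(\y^{(2)}_i)^\T\z^{(1)}_i
+ \log\!\sum_{l}\exp\bigl(z^{(1)}_{il}\bigr).
\nonumber
\end{align}
The first term is an inner product maximized exactly when the view-1 logits align with the view-2 soft assignment of the \emph{same} instance $i$; the second term is a log-partition normalizer playing the role of the temperature-scaled denominator in contrastive learning. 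Crucially, the sum here runs over \emph{classes} $l$, not over other instances $j\neq i$, so no negative instance pair ever enters.

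Next I would compare this with the contrastive loss. Writing the InfoNCE objective for instance $i$ and splitting it into its alignment part and its uniformity (denominator) part, the alignment part maximizes the cross-view similarity $\mathrm{sim}(\z^{(1)}_i,\z^{(2)}_i)$ of the matched pair, while the denominator supplies negative-pair repulsion. I would show that under the soft-assignment parametrization the positive-pair contribution has exactly the structure derived above: an alignment of the representations of instance $i$ across the two views, regularized by a normalizer, with no dependence on any $j\neq i$. Symmetrizing over the two directions (the two summands of Eq.~\eqref{ic}) then recovers the full $\mathcal L_{\rm code}$ and matches the symmetric positive-pair loss used in BYOL/SimSiam-style methods.

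To make the equivalence precise at the level of optima, I would use the decomposition $-(\y^{(2)}_i)^\T\log\y^{(1)}_i = {\rm H}(\y^{(2)}_i) + {\rm KL}(\y^{(2)}_i\,\|\,\y^{(1)}_i)$, and its symmetric partner, and observe that the (symmetrized) KL term is minimized precisely when $\y^{(1)}_i=\y^{(2)}_i$, \ie, when the positive pair is perfectly aligned---exactly the fixed point targeted by positive-pair contrastive learning.

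The main obstacle will be pinning down the exact notion of ``contrastive learning with positive pairs'' so the match is an identity rather than a loose analogy: I must argue that the log-partition term corresponds to the contrastive normalizer without smuggling in a dependence on negative instances, and that deleting the negatives from InfoNCE leaves precisely the alignment-plus-normalizer form obtained from $\mathcal L_{\rm code}$. A secondary subtlety is the entropy term ${\rm H}(\y^{(2)}_i)$ (and its partner ${\rm H}(\y^{(1)}_i)$) produced by the decomposition; I would need to interpret these as the confidence-sharpening (entropy-minimization) component that routinely accompanies positive-pair alignment, and verify that they enrich rather than break the equivalence, with collapse being controlled by the balancing regularizer of Eq.~\eqref{infor2} acting at the marginal level.
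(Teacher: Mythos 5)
Your proposal is correct in its algebra and reaches the right conclusion, but it takes a genuinely different route from the paper. The paper argues from the contrastive side: it writes down a generic contrastive loss over positive and negative pairs with the Gaussian similarity kernel $s(\bm{z}_i,\bm{z}_i)=\exp(-\|\bm{z}_i-\bm{z}_i\|_2^2)$, deletes the negative-pair term, and observes that $-\log s$ then reduces to the BYOL-style positive-pair objective $\sum_i\|\bm{z}_i-\bm{z}_i\|_2^2$ (squared distance between the two views' features of the same instance), which it identifies with coding consensus; notably, it never manipulates the cross-entropy loss of Eq.~\eqref{ic} at all. You argue from the coding-consensus side: you expand Eq.~\eqref{ic} through the softmax to expose an explicit alignment term $-(\y^{(2)}_i)^\T\z^{(1)}_i$ plus a log-partition over \emph{classes} (hence no negative instances anywhere), and then use the decomposition into ${\rm H}(\y^{(2)}_i)+{\rm KL}(\y^{(2)}_i\,\|\,\y^{(1)}_i)$ to show the minimizer is exactly the cross-view agreement $\y^{(1)}_i=\y^{(2)}_i$, i.e., the fixed point of positive-pair methods. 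Your direction buys rigor where the paper is loosest: it starts from the loss actually defined in the paper and makes the absence of negatives a structural fact rather than an assumption, and it honestly surfaces the entropy (sharpening) terms that the paper's squared-distance identification silently discards. The paper's direction buys brevity and a representation-level ($\z$-space rather than $\y$-space) link to BYOL, at the cost of asserting rather than deriving that the squared-distance objective "measures" Eq.~\eqref{ic}. In both cases the "equivalence" is one of optimizers/fixed points rather than a literal identity of loss functions, a caveat you correctly flag and the paper does not.
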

The proofs and further corollaries are in Appendix~B.2.

As shown in Figure~\ref{moti}, if we denote the learned matrix as $Z=[\bm b_1,\ldots,\bm b_k]=[{\bm s}_1,\ldots,\bm s_n]^\T$, we have the relationship:
\begin{align}
{\rm tr}\bigl((Z^{(1)})^\T Z^{(2)}\bigr)
=\sum_{j=1}^k ({\bm b}_j^{(1)})^\T{\bm b}_j^{(2)} 
=\sum_{i=1}^n ({\bm s}^{(1)}_i)^\T{\bm s}^{(2)}_i
\label{connetions}
% \notag
\end{align}
where we focus on the inner products, and then argue that the first term of Eq.~\eqref{connetions} operates at the whole matrix level, capturing global alignment; the second term aligns cross-view columns, reinforcing class-level correspondence; and the third term reflects the effect of instance-wise row contrastive comparison, focusing on individual sample alignment. Viewed from another perspective, the global consensus simultaneously captures both the coding and classifying effects, as shown in Eq.~\eqref{connetions}.
\section{Hierarchical Consensus Network}
The general framework is shown in Figure~\ref{framework2}. Given multiview data, a view-specific autoencoder is employed to exploit the distinct information~\cite{lin2021completer} within each view. Specifically, we denote the encoder as $f(X^{(v)}|\theta^{(v)})$ that takes the data $X^{(v)}$ as input and output the latent feature $Z^{(v)}$ of view $v$. Inversely, the decoder $g(Z^{(v)}|\phi^{(v)})$ aims to reconstruct the input data based on the latent feature, denoted as $\widehat{X}^{(v)}$. We implement the encoder with a four-layer MLP followed by a softmax. The decoders have the same architecture as the encoders.
Then, a reconstruction objective is employed to learn the semantic information of the input.

For each view, drop-feature augmentation is applied to the input data $X^{(v)}$ to enrich to input data of view $v$, which is implemented as randomly dropping certain feature dimensions. Formally, we first sample a random vector ${\bm m} \in \{0, 1\}^{d_v}$ with each entry being drawn from a Bernoulli distribution independently, \ie, $m_j \sim {\rm Bern}(1-\rho)$, where $m_j$ is the $j$-th entry of ${\bm m}$ and $\rho$ is the drop rate. Therefore, the augmented data of view $v$ is represented as $X_{\rm aug}^{(v)} = X^{(v)}[:,\mathbb{I}({\bm m})]$ where $\mathbb{I}(\cdot)$ indicates whether each entry of ${\bm m}$ is 1 or 0. The latent feature $Z_{\rm aug}^{(v)}$ of $X_{\rm aug}^{(v)}$ is also obtained from the view-specific encoder, \ie, $Z_{\rm aug}^{(v)} = f(X_{\rm aug}^{(v)}|\theta^{(v)})$ and is used to reconstruct the augmented data. 

Specifically, the reconstruction objective is defined by:
\begin{align}
\mathcal{L}_{\rm Rec} =\sum\limits_{v=1}^{n_v} \bigl(\|X^{(v)}-\widehat{X}^{(v)}\|_{\rm F}^2+\|X^{(v)}_{\rm aug}-\widehat{X}^{(v)}_{\rm aug}\|_{\rm F}^2 \bigr)\,.
\label{recloss}
\end{align}
Given the learned latent features $Z^{(1)}$, $Z^{(2)}$, $Z_{\rm aug}^{(1)}$, and $Z_{\rm aug}^{(2)}$, we obtain the corresponding class probabilities by applying the softmax operation on them, \ie, $Y^{(1)}$, $Y^{(2)}$, $Y_{\rm aug}^{(1)}$ and $Y_{\rm aug}^{(2)}$. Then, we detail how to conduct consensus learning between and within the two views. 

First, we perform the classifying consensus learning between the augmented data from two views. Formally, the joint class probability $p(\C^{(1)}_{\rm aug}|\C^{(2)}_{\rm aug})$ is obtained by Eq.~\eqref{joint_p}, and the marginal probability distributions $p(\C^{(1)}_{\rm aug})$ and $p(\C^{(2)}_{\rm aug})$ are calculated by summing along rows and columns of the joint probability matrix respectively. Then, the classifying consensus objective is given by:
\begin{equation}
 \mathcal L_{\rm Cls} = \sum_{\substack{u,v=1\\u> v}}^{n_v}\alpha{\rm H}(\C^{(u)}_{\rm aug}|\C^{(v)}_{\rm aug})- \beta {\rm H}(\C^{(u)}_{\rm aug})- \gamma{\rm H}(\C^{(v)}_{\rm aug}).
\label{ccloss}
\end{equation}

Second, within each view, we conduct coding consensus learning between the augmented data and the original data. The mechanism behind this is similar to contrastive learning which pulls the positive samples closer. Formally, inspired by weak-to-strong pseudo supervision in semi-supervised learning~\cite{sohn2020fixmatch,xu2024structure}, we transform the posterior class probabilities $Y$ of the original data into hard labels $\hat{T}=[\hat{t}_{ij}]$. $\hat{t}_{ij}^{(v)}$ denotes pseudolabel of the $i$-th data point belonging to the $j$-th class in the $v$-th view. 

The coding consensus is defined by a weak-to-strong pseudo-supervision loss~\cite{xu2024structure}:
\begin{align}
\mathcal L_{\rm Code}=-\sum_{v=1}^{n_v}\sum_{i=1}^n\bigl({\hat{\bm{t}}}^{(v)}_{i}\bigr)^\T\log \y^{(v)}_{i,\rm aug}
\label{w2sloss}
\end{align}
where ${\hat{\bm{t}}}^{(v)}_{i}$ is the one-hot pseudolabel and $\y^{(v)}_{i,\rm aug}$ represent the $i$-th row of $Y^{(v)}_{\rm aug}$. By enforcing the feature of the augmented data consistent with that of the raw data, Eq.~\eqref{w2sloss} requires the encoder to learn the invariant semantics of each view.

Finally, we propose the global consensus to capture global alignment between two views by minimizing the difference between the latent features. The objective is defined by:
\begin{equation}
\mathcal{L}_{\rm Glb}  = -\sum_{\substack{u,v=1\\u> v}}^{n_v}{\rm tr}\Bigl(\bigl({Z^{(u)}}\bigr)^\T {Z}^{(v)}+\bigl({Z_{\rm aug}^{(u)}}\bigr)^\T {Z}_{\rm aug}^{(v)} \Bigr)\,.
\label{gcloss}
\end{equation}
Overall, by minimizing the above inter-view objective, the model is optimized to generate similar features for the same samples between different views. The overall loss objective of the proposed method is given by
\begin{equation}
\mathcal{L}=\mathcal{L}_{\rm Rec}  + \mathcal{L}_{\rm Cls}+\lambda_1\mathcal{L}_{\rm Code} + \lambda_2\mathcal{L}_{\rm Glb}
    \label{total}
\end{equation}
where $\lambda_1$ and $\lambda_2$ denote trade-off hyperparameters.

After the model optimization, the latent embeddings outputted by the encoder of each view are concatenated to obtain the final features for downstream tasks, \ie,
\begin{equation}
Z =  \left [  Z^{(1)},Z^{(2)} ,\ldots, Z^{(n_v)}\right ].
\end{equation}
 The proposed method is summarized in Algorithm~\ref{alg}.
\begin{algorithm}[!t]
\caption{The HCN algorithm.}
\begin{algorithmic}[1]
\REQUIRE Multiview dataset $\mathcal{X}=\{X^{(v)}\}^{n_v}_v$, drop rate $\rho$, hyperparameters $\lambda_1$, $\lambda_2$, $\alpha$, $\beta$, and $\gamma$.
\ENSURE $E$ and $\{\theta^{(v)},\phi^{(v)}, \forall v\in\{1,2,\ldots,n_v\}\}$.
\WHILE{$e<E$}
\FOR{mini-batch samples in $\mathcal{X}$}
\STATE Obtain $X^{(v)}_{\rm aug}$ for each view $v$;
\STATE Compute $Z^{(v)}, Z^{(v)}_{\rm aug}, Y^{(v)}$ and $Y^{(v)}_{\rm aug}$ for each view by the view-specific encoder and the softmax layer;
\STATE Calculate $\mathcal{L}_{\rm Rec}$ and $\mathcal{L}_{\rm Cls}$ by Eqs.~\eqref{recloss} and~\eqref{ccloss};
\STATE Calculate $\mathcal{L}_{\rm Code}$ and $\mathcal{L}_{\rm Glb}$ by Eqs.~\eqref{w2sloss} and~\eqref{gcloss};
\STATE Update $\{\theta^{(v)},\phi^{(v)},\forall\,v\}$ by Eq.~\eqref{total};
\STATE $e = e + 1$;
\ENDFOR
\ENDWHILE
\RETURN Fused feature: $Z =  \left [  Z^{(1)},Z^{(2)} ,\ldots, Z^{(n_v)}\right ]$.
\end{algorithmic}
\label{alg}
\end{algorithm}
\subsection{Computational Complexity}
The complexity of HCN is $\mathcal{O}(nn_v^2k^2b+2nn_vd_vh^{(l+1)})$ where $b$, $h$, $l$, $n$, and $k$ denote the mini-batch size, the maximum number of hidden layers, the layer number, the number of samples, and the feature dimension, respectively. The derivation process is presented in Appendix B.3.

\begin{table*}[!t]
\centering
\begin{tabular*}{0.98\textwidth}{@{\extracolsep{\fill}\,}lcccccccccccc}
\toprule 
\multirow{2}{*}{Method}
&\multicolumn{3}{c}{LandUse-21}  &\multicolumn{3}{c}{Caltech101-20}& \multicolumn{3}{c}{ Scene-15}& \multicolumn{3}{c}{Noisy MNIST}    \\
 \cmidrule{2-13}
& ACC &NMI &ARI&ACC &NMI &ARI&ACC &NMI &ARI&ACC &NMI &ARI\\
\midrule
$\text{SC}_{\text{Agg}}$\textcolor{gray}{\textsubscript{[NeurIPS01]}}& 24.69 & 30.10   &   10.23  &  48.78   &  60.98   &   34.68  &   35.26  &  35.92   & 20.20    &   44.10  &  40.51   &  27.16    \\
 DCCA \textcolor{gray}{\textsubscript{[ICML13]}}&15.51&  23.15&  4.43&  41.89&  59.14&  33.39& 36.18&   38.92&  20.87&  85.53&  89.44&  81.87  \\
 DCCAE \textcolor{gray}{\textsubscript{[ICML15]}}&15.62&  24.41&  4.42&  44.05&  59.12&  34.56&  36.44&   39.78&  21.47&  81.60&  84.69&  70.87  \\
  BMVC\textcolor{gray}{\textsubscript{[TPAMI19]}} &25.34& 28.56&  11.39&  42.55&  63.63&  32.33& 40.50&   41.20& 24.11&  81.27&  76.12&  71.55  \\
 PIC\textcolor{gray}{\textsubscript{[IJCAI20]}} &24.86&  29.74&  10.48&  62.27 &  67.93&  51.56&  38.72&   40.46&  22.12&  -&  -&  -  \\
 $\text{AE}^2\text{Nets}$ \textcolor{gray}{\textsubscript{[CVPR19]}} &24.79& 30.36&  10.35&  49.10&  65.38&  35.66&  36.10&   40.39&  22.08&  56.98&  46.83&  36.98 \\
  % COMIC\textcolor{gray}{\textsubscript{[ICML19]}} &  &    &     &     &     &     &     &     &     & -   &   -  &   -   \\
EERIMVC\textcolor{gray}{\textsubscript{[TPAMI20]}}  &24.92 &29.57 &12.24& 43.28& 55.04 &30.42& 39.60& 38.99& 22.06&  65.47& 57.69 &49.54\\

MFLVC \textcolor{gray}{\textsubscript{[CVPR22]}}&23.67&  27.50&  11.27&  49.25&  41.40&  45.63& 41.49&   42.28&  24.41& 96.91&  92.44&  93.36 \\
CVCL \textcolor{gray}{\textsubscript{[ICCV23]}}&25.40&  29.59&  11.78&  34.77 & 59.93  &25.70 &  38.43&  39.58&  22.53&   \underline{97.87}&  \underline{94.18}   &    \textbf{97.87}  \\
 DCP \textcolor{gray}{\textsubscript{[TPAMI23]}}&26.23& 30.65 &13.70& \underline{70.18} &\underline{68.06} &\underline{76.88}& 41.07 &\underline{45.11} &24.78&  82.78&  84.86&  74.83 \\
   GCFAgg \textcolor{gray}{\textsubscript{[CVPR23]}}&28.06  &32.44  & 14.40 & 34.12 &  53.20 & 23.16  &39.72   &41.37  & 23.01 & 91.44 & 86.56  &83.16\\
   DealMVC\textcolor{gray}{\textsubscript{[MM23]}}& 10.41  &7.11  &1.69& 39.56&  56.91 &36.04  &38.96  & 42.26  &  24.21 & 32.57 & 28.12 & 13.72  \\
  SEM\textcolor{gray}{\textsubscript{[NeurIPS23]}} & \underline{30.02} & \underline{34.75} &\underline{15.93}  & 37.33 & 59.95  &  28.44 &  40.53 & 42.48 & 25.04 & 60.04 & 64.69& 43.15\\
  MVCAN \textcolor{gray}{\textsubscript{[CVPR24]}}&23.94  &29.57  & 10.70 &  48.63& 66.80  & 44.85  & \underline{41.54}  & 44.38 & \underline{25.63} & 78.46 & 79.01  & 70.51  \\

HCN (ours)&\textbf{32.81} &  \textbf{38.58} &  \textbf{17.86}  & \textbf{77.39}    &  \textbf{74.64}    &   \textbf{88.70}    &   \textbf{46.05}  &  \textbf{45.56}    &    \textbf{28.54}   &  \textbf{98.07}     &  \textbf{94.83}     &   \underline{95.79}      \\
\bottomrule
\end{tabular*}
\caption{The clustering results with two views on four datasets. ``-'' indicates unavailable results due to the out-of-memory issue. The best and the second result are \textbf{bold} and \underline{underlined} respectively.}
\label{result1}
\end{table*}

\begin{table*}[t]
\centering
\begin{tabular*}{0.98\textwidth}{@{\extracolsep{\fill}\,}lccccccccc}
\toprule 
\multirow{2}{*}{Method}
&\multicolumn{3}{c}{LandUse-21}  &\multicolumn{3}{c}{Caltech101-20}& \multicolumn{3}{c}{ Scene-15}  \\
 \cmidrule{2-10}
& ACC &NMI &ARI&ACC &NMI &ARI&ACC &NMI &ARI\\
\midrule
$\text{SC}_{\text{Agg}}$\textcolor{gray}{\textsubscript{[NeurIPS01]}}& 26.44 &   32.73 &  11.91   &  48.00   &  60.40   & 33.82    &   34.05  & 34.83    &  19.73  \\ 
  % COMIC\textcolor{gray}{\textsubscript{[ICML19]}} &  &    &     &     &     &     &     &     &        \\
MFLVC \textcolor{gray}{\textsubscript{[CVPR22]}} &  22.38&  27.23& 9.91    & 53.43    & 39.33    &  44.06   &   40.15  & 41.41    & 24.11  \\
CVCL \textcolor{gray}{\textsubscript{[ICCV23]}} & 23.47 & 27.50   &  10.61   &36.93     &  56.70  &  26.24   & \underline{44.59}    &   42.17  &24.11  \\
 DCP \textcolor{gray}{\textsubscript{[TPAMI23]}} &  \underline{26.66}& \underline{32.74}   & \underline{13.50}    &\underline{70.58} &   \textbf{69.59}  & \textbf{76.93}    & 41.81    &   \textbf{45.23}  &  25.84 \\
   GCFAgg \textcolor{gray}{\textsubscript{[CVPR23]}} & 24.95 &   29.06 &  10.68   &  35.24  & 56.04    & 24.62    &  44.14   &  43.40   &23.99   \\
   DealMVC\textcolor{gray}{\textsubscript{[MM23]}}  &12.04  &   9.39 & 2.74    &  37.76   &   47.40  &  34.35   &  40.02   &   42.99  &  24.16 \\
  SEM\textcolor{gray}{\textsubscript{[NeurIPS23]}}  &  25.12 &  30.12  & 11.55   &37.34 & 62.51   &  28.66   & 42.45    &  41.25   & \underline{26.67}  \\
  MVCAN \textcolor{gray}{\textsubscript{[CVPR24]}}  & 26.18 &  32.53  &  12.57   &  50.04   &   66.04  &  44.85   & 42.07    &   44.38  &  25.57 \\
HCN (ours) &\textbf{33.30}  &  \textbf{38.55}  & \textbf{18.82}    &   \textbf{71.35}  &   \underline{68.61}  &   \underline{73.25}  &  \textbf{45.20}   &   \underline{44.52}  &  \textbf{28.18}   \\
\bottomrule
\end{tabular*}
\caption{The multiview clustering results on three datasets. The best and the second result are \textbf{bold} and \underline{underlined}, respectively.}
\label{result2}
\end{table*}
\section{Experiments}
\subsection{Experimental Setup}
\textbf{Datasets.} We conduct multiview clustering on four widely used datasets to evaluate the proposed HCN, \ie, (1) LandUse-21~\cite{2010Bag} contains 2,100 satellite images across 21 categories, using PHOG and LBP features as two views; (2) Caltech101-20~\cite{fei2004learning} includes 2,386 RGB images from 20 subjects, with HOG and GIST features as two views; (3) Scene-15~\cite{FeiFei2005} comprises 4,485 images from 15 scene categories, utilizing PHOG and GIST as two views; (4) Noisy MNIST~\cite{wang2015deep} uses the original 70k MNIST images as one view and within-class images with white Gaussian noise as the second view. In three views experiments, we use the HOG, GIST, and LBP features. For the Scene-15 and LandUse-21 datasets for the Caltech101-20 dataset, we use the PHOG, LBP, and GIST features.

\textbf{Comparison Methods.} We compare HCN with the following SOTA methods on multiview clustering. The comparison includes Spectral clustering~\cite{ng2001spectral}, traditional multiview clustering methods such as BMVC~~\cite{Zhang2018}, PIC~\cite{ijcai2019p510} and EERIMVC~\cite{liu2020efficient}, based on CCA or CCA-related methods DCCA~\cite{andrew2013deep}, DCCAE~\cite{wang2015deep} and $\text{AE}^2\text{Nets}$)~\cite{Zhang_2019_CVPR}. Besides, contrastive learning methods are also included, \ie, DCP~\cite{Lin2022}, MFLVC~\cite{Xu_2022_CVPR}, CVCL~\cite{chen2023deep}, GCFAggMVC~\cite{yan2023gcfagg}, DealMVC~\cite{yang2023dealmvc}, and SEM~\cite{xu2024self}. Finally, a more recent method, MVCAN~\cite{xu2024investigating} is employed, which considers the case of noisy views based on deep embedding clustering~\cite{xie2016unsupervised}. 

\begin{table}[!t]
\centering
\begin{tabular}{l|cccccc}
\toprule
\multirow{2}{*}{\begin{tabular}[c]{@{}c@{}}HCN\\ w/o \end{tabular}}  & \multicolumn{3}{c}{LandUse-21 }& \multicolumn{3}{c}{ Noisy MNIST }\\
 \cmidrule{2-7}
\multicolumn{1}{c|}{}&ACC&NMI &ARI & ACC &   NMI &ARI   \\ 
\midrule
$\mathcal{L}_{\rm Rec}$&32.27 & 37.82&17.23 &   97.93  &    94.60  &95.51  \\
 $\mathcal{L}_{\rm Cls}$ & 24.57   &  26.22 &   10.11   &       25.62   &  24.58    & 9.79 \\
 $\mathcal{L}_{\rm Glb}$& 32.21& 38.02 &  17.73 &97.30 &  93.41 & 94.20   \\
 $\mathcal{L}_{\rm Code}$&  32.62  &    38.19  &  17.79  & 97.82  &     94.44& 95.29\\
 DA &   31.95   &  38.06   &   17.63   &   97.19  &    93.11 &   93.96    \\
\midrule
HCN&\textbf{32.81} &  \textbf{38.58} &  \textbf{17.86}  &  \textbf{98.07}     &  \textbf{94.83}     &   \textbf{95.79}      \\
\bottomrule
\end{tabular}
% }
\caption{Ablation study of each component in HCN.}
\label{Ablation}
\end{table}

\textbf{Implementation Details.} We select the best performance for each experiment and report the average performance of five runs with different seeds. $k$-means clustering algorithm is utilized to obtain the clustering results. To have a fair comparison, all the methods use the same views on each dataset, and use three views in multiview setting. In addition, following the DCP~\cite{Lin2022}, we only use a subset of Noisy MNIST consisting of 10k validation images and 10k testing images in the experiments. More details are in Appendix C.1.

\textbf{Evaluation Metrics.} To evaluate the clustering for our HCN, three widely used clustering metrics are employed, \ie, Normalized Mutual Information(NMI), Accuracy (ACC), and Adjusted Rand Index (ARI). For these metrics, a higher numerical value represents a better clustering.

\begin{figure}[!t]
\centering
\includegraphics[width=0.466\textwidth]{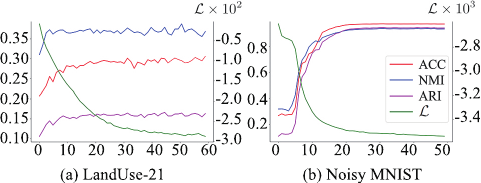}
\caption{Convergence and clustering performance of HCN with increasing epoch on LandUse-21 and Noisy MNIST.}
\label{covergence}
\end{figure}

\begin{figure}[!t]
\centering
\includegraphics[width=0.4\textwidth]{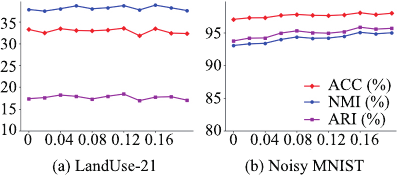}
\caption{Parameter sensitivity of drop rate $\rho$.}
\label{droprate}
\end{figure}
\subsection{Experimental Results}%Experiments on Clustering
The clustering results on four datasets are shown in Tables~\ref{result1} and~\ref{result2}, where Table~\ref{result1} reports the performance with two views and Table~\ref{result2} presents the results of three views. From the tables, we have the following observations: (1) Our approach significantly outperforms SOTA baselines on nearly all settings. Remarkably, in Table~\ref{result1}, our approach demonstrates outstanding performance on the Caltech101-20 and Scene-15 datasets, significantly outperforming the best comparison method in ACC and NMI. (2) Compared to contrastive multiview clustering methods, the proposed approach achieves SOTA results. This is attributed to our hierarchical consensus network capturing consensus at different levels, resulting in more comprehensive and discriminative features.
\subsection{Ablation Study}
To validate the importance of each component in HCN, we conducted ablation studies by discarding each component. The results are shown in Table~\ref{Ablation}, where DA means we apply classifying consensus objective without data augmentation. It can be observed that classifying consensus learning plays a more vital role in multiview feature learning than others as the clustering performance is significantly improved by adding classifying consensus objective. The coding consensus and the global consensus objective also improve clustering performance. Furthermore, Table~\ref{Ablation} also demonstrates that data augmentation helps learn more robust, invariant, and discriminative features of the multiview data.
\subsection{Model Analysis}
\textbf{Convergence Analysis.} We investigate the convergence of the proposed method. As illustrated in Figure~\ref{covergence}, we provide the dynamics of loss value and three clustering metrics with increasing epochs on the LandUse-21 and Noisy MNIST datasets. From Figure~\ref{covergence}, we see that the loss value decreases rapidly in a few iterations until convergence is achieved. Meanwhile, the clustering performance metrics quickly increase and stabilize after several epochs. This demonstrates the promising convergence property of HCN.

\textbf{Analysis of Hyperparameters.} Without loss of generality, we investigate the sensitivity of the proposed method w.r.t the trade-off hyperparameters $\lambda_1$ and $\lambda_2$ on the Noisy MNIST and LandUse-21 datasets, where $\lambda_1$ and $\lambda_2$ range from \{0.01, 0.05, 0.1, 0.5, 1, 5\}. As shown in Figure~\ref{lam1lam2}, the accuracy values achieve relatively stable with most combinations of $\lambda_1$ and $\lambda_2$. In addition, Figure~\ref{droprate} presents the sensitivity of HCN w.r.t the drop rate $\rho$ which varies in a range of [0, 0.2] with 0.02 intervals. The results show that the results are relatively stable with $\rho$ varying within a wide range.

\begin{figure}[!t]
\captionsetup[subfloat]{labelsep=none,format=plain,labelformat=empty}
\centering
\subfloat[Raw]{\includegraphics[width=0.12\textwidth]{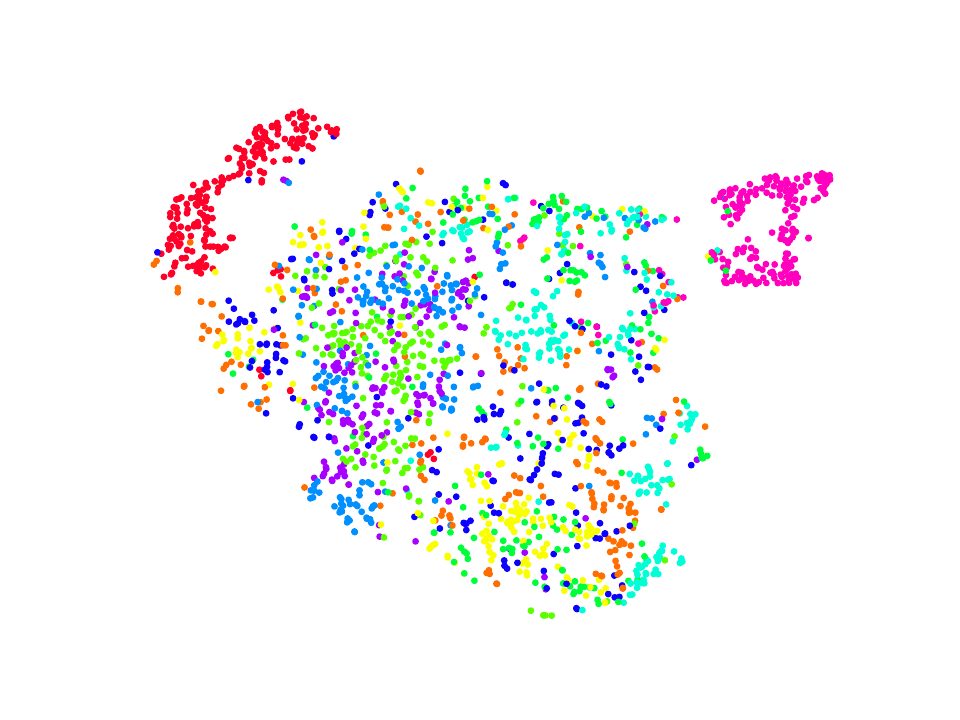}}
\subfloat[CVCL]{\includegraphics[width=0.12\textwidth]{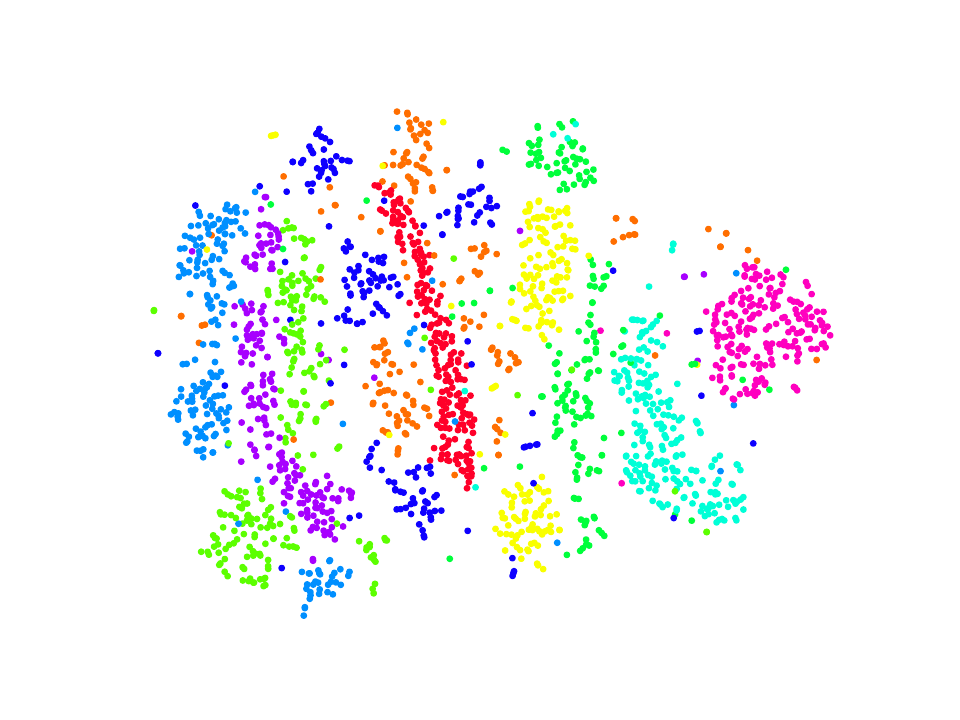}}
\subfloat[MVCAN]
{\includegraphics[width=0.12\textwidth]{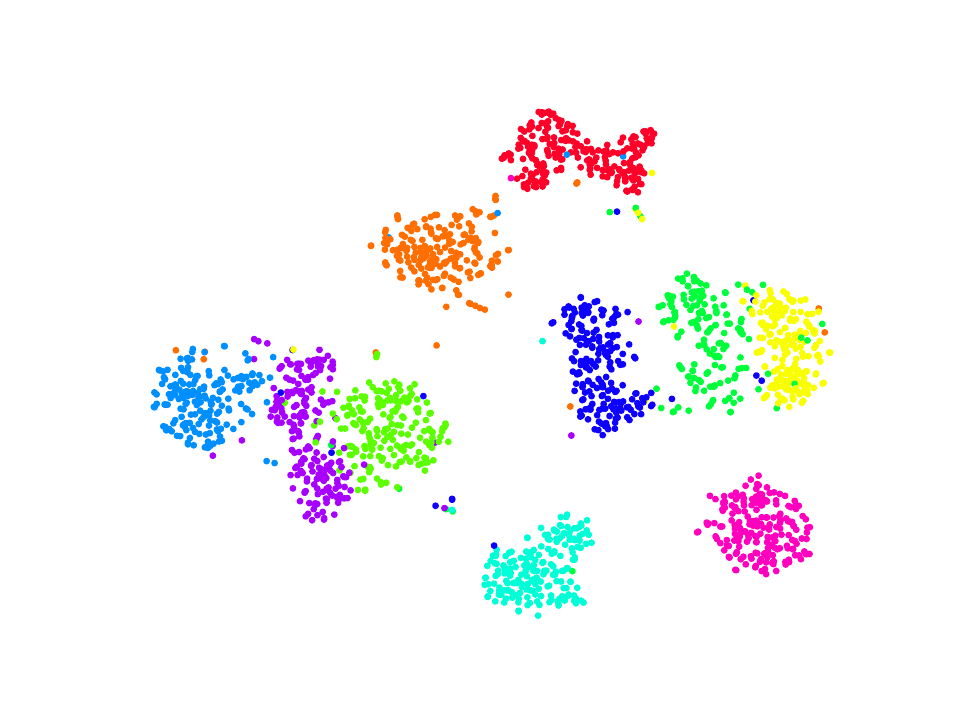}}
\subfloat[HCN]{\includegraphics[width=0.12\textwidth]{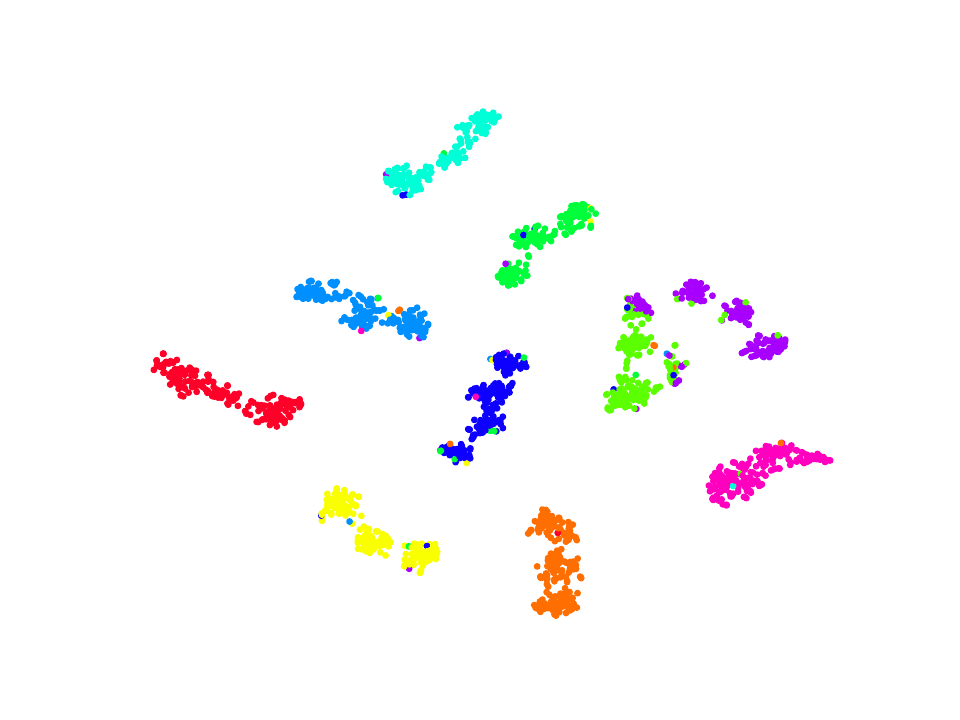}}
\caption{Visualizations on Noisy MNIST with baselines.}
\label{visul}
\end{figure}
\begin{figure}[!t]
\centering
\includegraphics[width=0.46\textwidth]{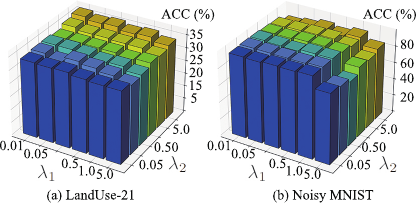}
\caption{Parameter sensitivity of $\lambda_1$ and $\lambda_2$.}
\label{lam1lam2}
\end{figure}
\subsection{Visualization}
In Figure~\ref{visul}, we provide the $t$-SNE visualizations of our approach and other baselines on Noisy MNIST. Specifically, we visualize the fused features learned by the encoder of the proposed HCN. From the figure, we observe that raw features are non-discriminative at the initial stage. After training with HCN, the learned features become more discriminative among different classes and each class becomes more compact compared to baselines, demonstrating the effectiveness of HCN for multiview feature learning. The visualizations of other comparison methods are in Appendix C.3.
\section{Conclusion}
In this paper, we explore hierarchical consensus learning for multiview feature learning and introduce three consensus indices across views, \ie, classifying, coding, and global consensus, providing promising insights into multiview feature learning. Specifically, classifying consensus explores consistency between views from a CCA perspective, while the coding consensus closely resembles contrastive learning. Global consensus simultaneously captures both the coding and classifying effects. Based on these, we propose HCN for multiview feature learning. HCN effectively captures hierarchical consensus between different views from different perspectives, resulting in more comprehensive and discriminative features. Extensive experiments and ablation studies on four multiview datasets validate the superiority and effectiveness of our HCN method. 
\section*{Acknowledgments}
This work was supported by the National Natural Science Foundation of China under Grant No.~62176108, Natural Science Foundation of Qinghai Province of China under No.~2022-ZJ-929, Science Foundation of National Archives Administration of China under No.~2024-B-006, and Supercomputing Center of Lanzhou University.

\bibliography{aaai25}
\clearpage
\appendix
\section{Dataset}
We conduct multiview clustering experiments on four widely used multiview datasets as shown in Table~\ref{dataset}, which are summarized as follows:
\begin{itemize}
\item LandUse-21~\cite{2010Bag} comprises 2,100 satellite images spanning 21 categories, offering three distinct views. We employ PHOG and LBP features as two views.
\item Caltech101-20~\cite{fei2004learning}, an RGB image dataset with multiple views, includes 2,386 images featuring 20 subjects. Six distinct features are extracted, and we utilize HOG and GIST features as two views.
\item Scene-15~\cite{FeiFei2005}, totaling 4,485 images across 15 scene categories in indoor and outdoor settings, extracts three features (GIST, PHOG, and LBP) for each image, with PHOG and GIST used as two views.
\item Noisy MNIST~\cite{wang2015deep} employs the original 70k MNIST images as one view and randomly selects within-class images with white Gaussian noise as the other view.
\end{itemize}

\begin{table}[h]
\centering
\begin{tabular*}{0.48\textwidth}{@{\extracolsep{\fill}}l|rrrl}
\toprule 
 Datasets&  $n$~~~~&$n_v$&$k$& dimensionalities \\
 \midrule
Caltech101-20  &  2386&6&20&\small{48,40,254,1984,512,928}\\
Scene-15&4485&3& 15&\small{20,59,40}\\
LandUse-21&2100&3&21&\small{20,59,40}\\
Noisy MNIST&70000&2&10&\small{784,784}\\
\bottomrule
\end{tabular*}
\caption{The detail of the datasets }
\label{dataset}
\end{table}
\section{Extended derivations}
\subsection{Derivation of Eq. (6)}
Below is a derivation of Eq. (6). According to the information theory, for $\C^{(i)}$ and $\C^{(j)}$, the joint entropy is given by:
\begin{equation*}
{\rm H}(\C^{(i)},\C^{(j)})  ={\rm H} (\C^{(j)}|\C^{(i)})+ {\rm H} (\C^{(i)})
\end{equation*}
Thus, for the two views, we have
\begin{equation}
{\rm H} (\C^{(2)}|\C^{(1)})={\rm H}(\C^{(2)})-{\rm H} (\C^{(1)})+{\rm H} (\C^{(1)}|\C^{(2)}). \tag{B1}   
\end{equation}
By substituting Eq.~(B1) into  Eq.~(4) and combining Eq.~(5), we have Eq.~(6). 
\begin{table*}[!t]
    \centering
    \resizebox{0.95\textwidth}{!}{\begin{tabular}{ccccccccccccccccc}
\toprule 
\multirow{2}{*}{$\mathcal{L}_{\rm Rec}$}  & \multirow{2}{*}{$\mathcal{L}_{\rm Cls}$}  & \multirow{2}{*}{$\mathcal{L}_{\rm Glb}$ } & \multirow{2}{*}{$\mathcal{L}_{\rm Code}$}& \multirow{2}{*}{DA} & \multicolumn{3}{c}{LandUse-21 }& \multicolumn{3}{c}{Caltech101-20} & \multicolumn{3}{c}{ Scene-15}& \multicolumn{3}{c}{ Noisy MNIST }\\
 \cmidrule{6-17}
\multicolumn{5}{c}{}&ACC&NMI &ARI& ACC  &  NMI  &    ARI  & ACC &  NMI  &     ARI & ACC &   NMI &ARI   \\ 
\midrule
&\Checkmark &\Checkmark &  \Checkmark &\Checkmark&32.27 & 37.82&17.23 &  58.30  &  69.54    & 54.20  &  44.82  &  44.78    &  27.98   &  97.93  &    94.60  &95.51  \\
\Checkmark&    &\Checkmark &  \Checkmark & \Checkmark&24.57   &  26.22 &   10.11   &      61.35 &  60.54 &  66.47   &   34.04 &  36.16    &   19.48  & 25.62   &  24.58    & 9.79 \\
\Checkmark&\Checkmark & & \Checkmark&\Checkmark& 32.21& 38.02 &  17.73 & 62.80  & 69.37 &61.93   & 45.43  & 45.16 &  27.42 &97.30 &  93.41 & 94.20   \\
\Checkmark&\Checkmark & \Checkmark &     & \Checkmark& 32.62  &    38.19  &  17.79  & 77.26     &   74.56   & 88.51  &   40.60   &   43.79  & 26.30   & 97.82  &     94.44& 95.29\\
\Checkmark&\Checkmark &\Checkmark & \Checkmark &  & 31.95   &  38.06   &   17.63 &  74.64   & 73.90   &  84.73  & 42.17 &  46.45 &   28.39  & 97.19  &    93.11 &   93.96 \\
\Checkmark&\Checkmark &\Checkmark & \Checkmark & \Checkmark &\textbf{32.81} &  \textbf{38.58} &  \textbf{17.86}  & \textbf{77.39}    &  \textbf{74.64}    &   \textbf{88.70}    &   \textbf{46.05}  &  45.56    &    \textbf{28.54}&  \textbf{98.07}     &  \textbf{94.83}     &   \textbf{95.79}      \\
\bottomrule
\end{tabular}}
\caption{Ablation study of each loss component in the proposed method on all the datasets. 
}
\label{Ablation}
\end{table*}

\begin{table*}[!t]
\centering
\resizebox{0.95\textwidth}{!}{\begin{tabular}{ccccccccccccccccc}
\toprule 

\multirow{2}{*}{$\mathcal{L}_{\rm Rec}$}  & \multirow{2}{*}{$\mathcal{L}_{\rm Cls}$}  & \multirow{2}{*}{$\mathcal{L}_{\rm Glb}$ } & \multirow{2}{*}{$\mathcal{L}_{\rm Code}$}& \multirow{2}{*}{DA} & \multicolumn{3}{c}{LandUse-21 }& \multicolumn{3}{c}{Caltech101-20} & \multicolumn{3}{c}{ Scene-15}\\
 \cmidrule{6-14}
\multicolumn{5}{c}{}&ACC&NMI &ARI& ACC  &  NMI  &    ARI  & ACC &  NMI  &     ARI   \\ 
\midrule
&\Checkmark &\Checkmark &  \Checkmark &\Checkmark&31.85 & 36.96&16.91  & 66.93    & 65.31  &  70.07  & 44.41    & 43.57  & 27.43    \\
\Checkmark&    &\Checkmark &  \Checkmark & \Checkmark&27.03& 30.72 &  12.19 & 49.31  &50.19 &44.22  & 32.49  & 33.97 &  16.59  \\
\Checkmark&\Checkmark & & \Checkmark&\Checkmark&  31.15  &    36.43 & 16.70  &55.71    &  63.03 &48.64 &   43.32 &  43.41 & 26.68    \\
\Checkmark&\Checkmark & \Checkmark &   & \Checkmark  & 30.03  & 35.01  &  15.29 & 67.24 &  66.56&   70.28& 40.93  &  41.42 & 24.30 \\
\Checkmark&\Checkmark &\Checkmark & \Checkmark &  & 31.55   &  36.75  &  16.76 &  68.62  & 67.19  & 72.59 & 40.31 &  43.38 &   26.63   \\
\Checkmark&\Checkmark &\Checkmark & \Checkmark & \Checkmark  &\textbf{33.30}  &  \textbf{38.55}  & \textbf{18.82}    &   \textbf{71.24}  &   \textbf{68.49}  &   \textbf{73.25}  &  \textbf{45.20}   &   \textbf{44.52}  &  \textbf{28.18}      \\
\bottomrule
\end{tabular}}
\caption{Ablation study of each loss component in the proposed method with three views on all the datasets.}
\label{Ablation_3}
\end{table*}

\begin{figure*}[!t]
\captionsetup[subfloat]{labelsep=none,format=plain,labelformat=empty}
\centering
\subfloat{\includegraphics[width=0.13\textwidth]{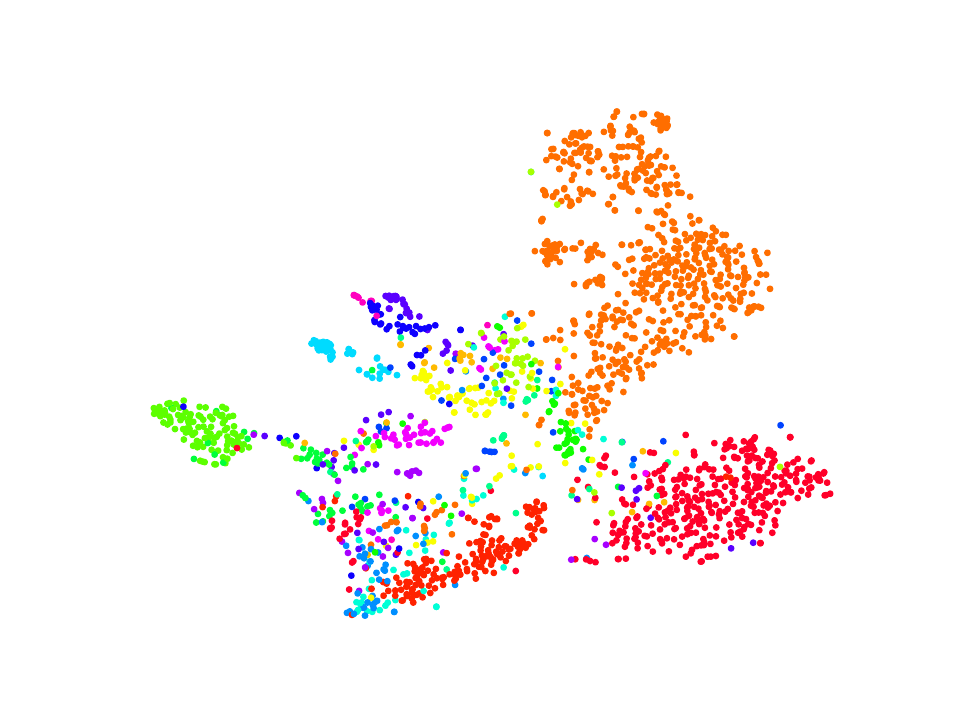}}
\subfloat{\includegraphics[width=0.13\textwidth]{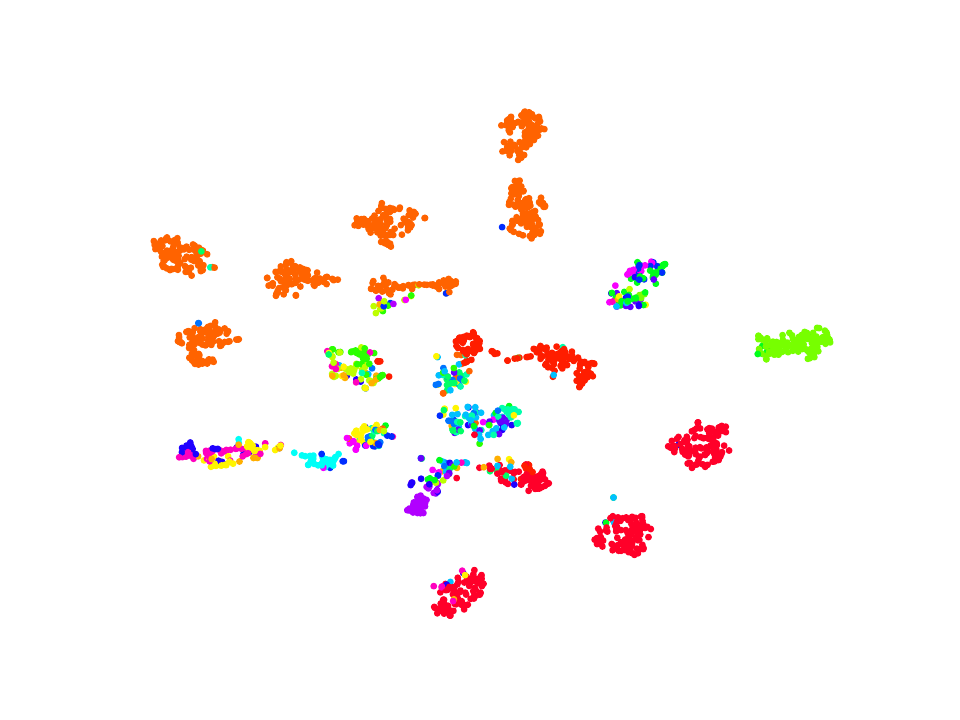}}
\subfloat{\includegraphics[width=0.13\textwidth]{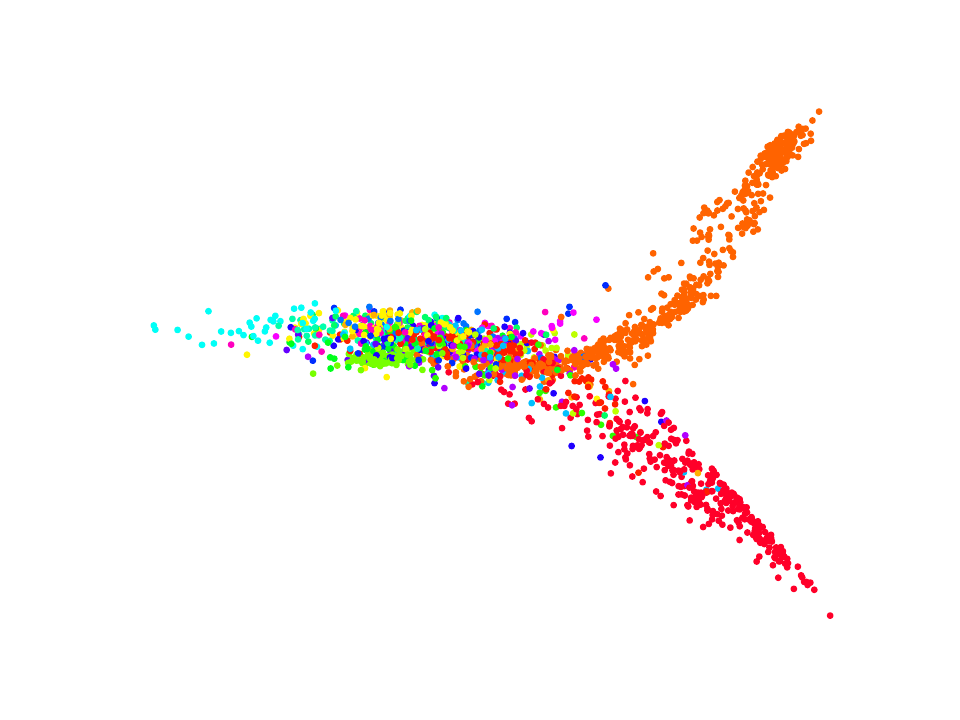}}
\subfloat{\includegraphics[width=0.13\textwidth]{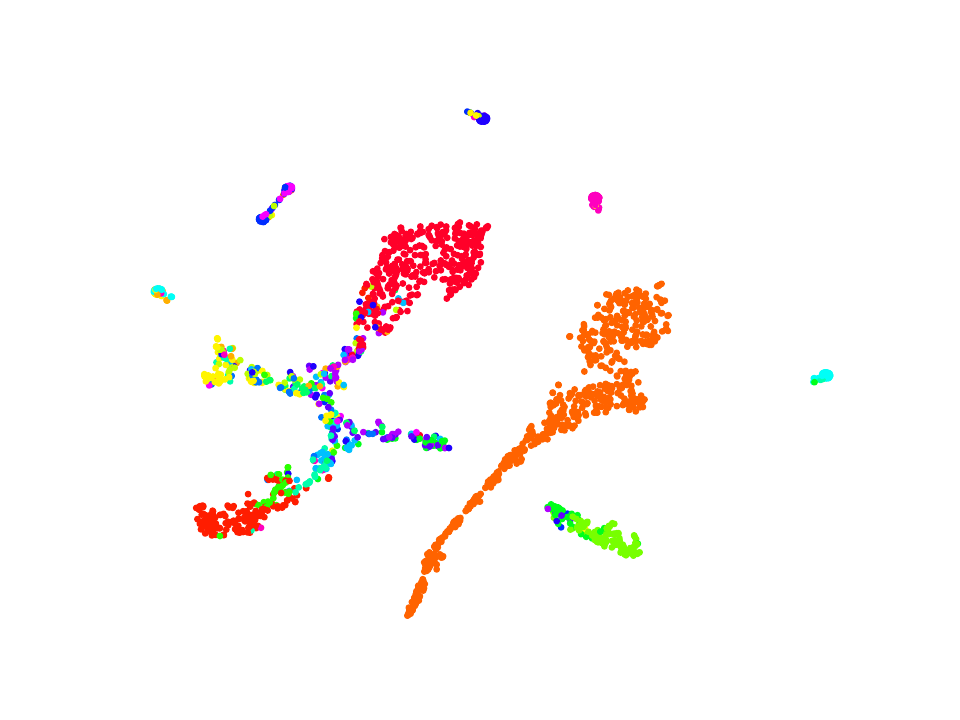}}
\subfloat{\includegraphics[width=0.13\textwidth]{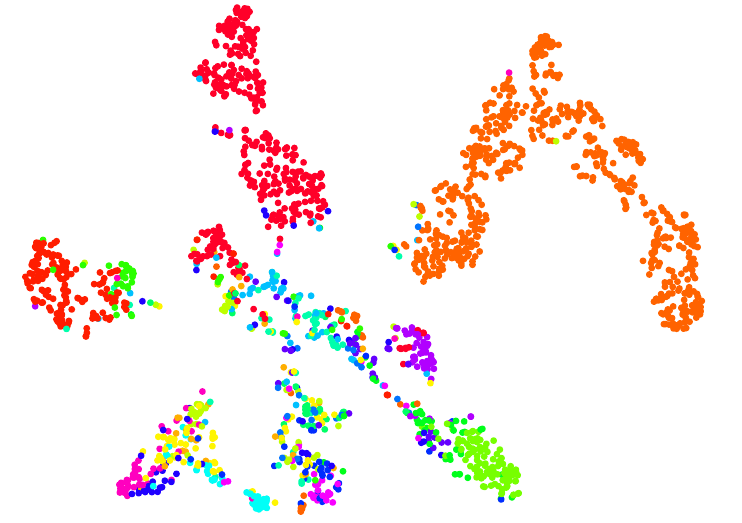}}
\subfloat{\includegraphics[width=0.13\textwidth]{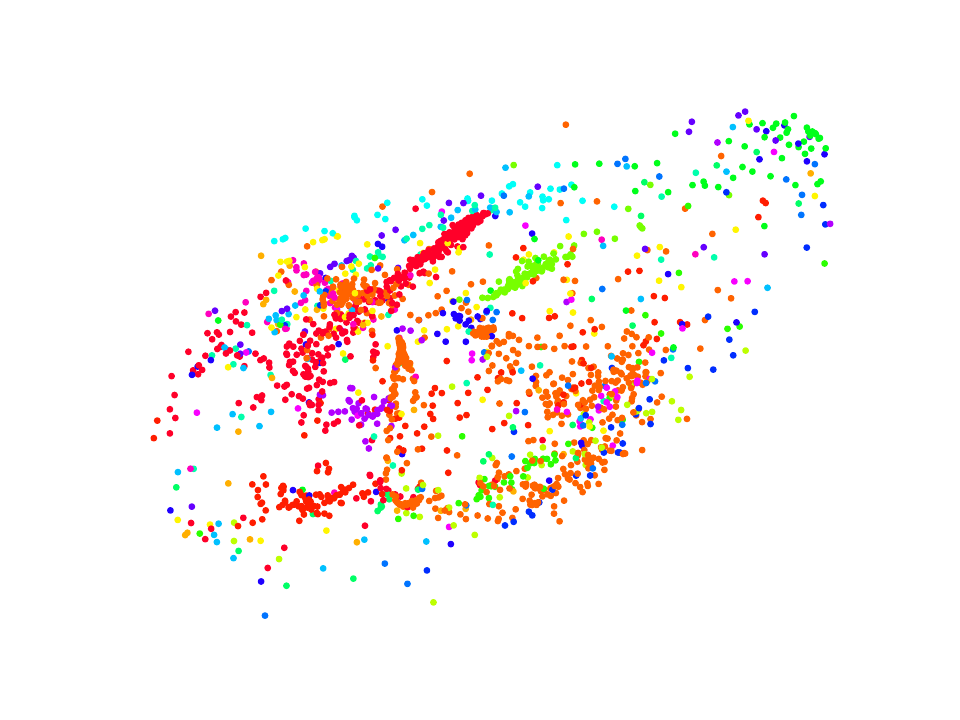}}
\subfloat{\includegraphics[width=0.13\textwidth]{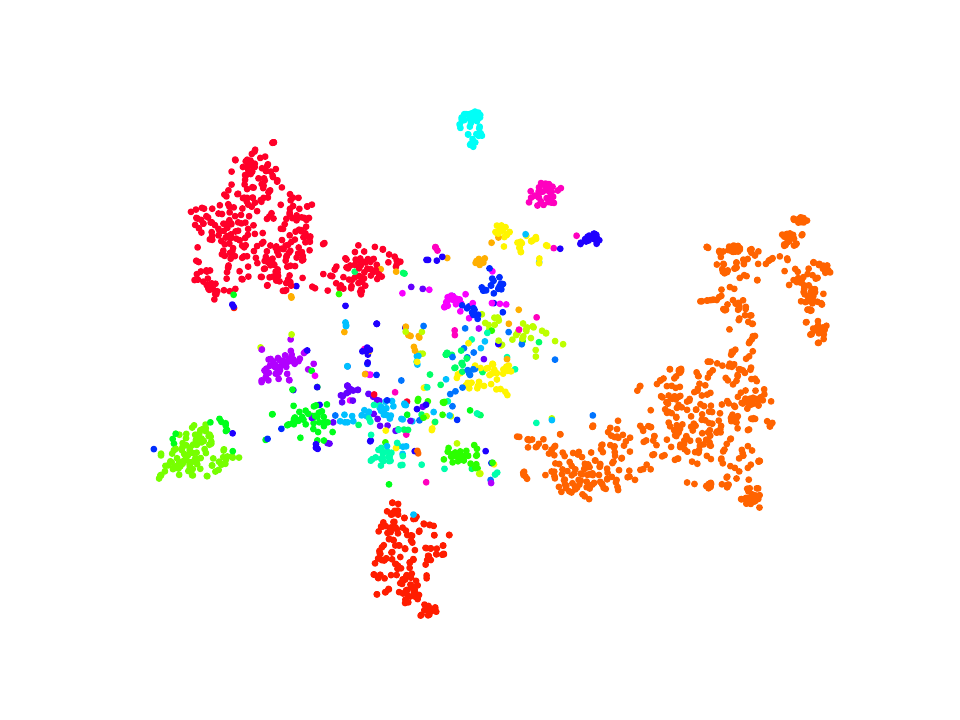}}
\subfloat{\includegraphics[width=0.13\textwidth]{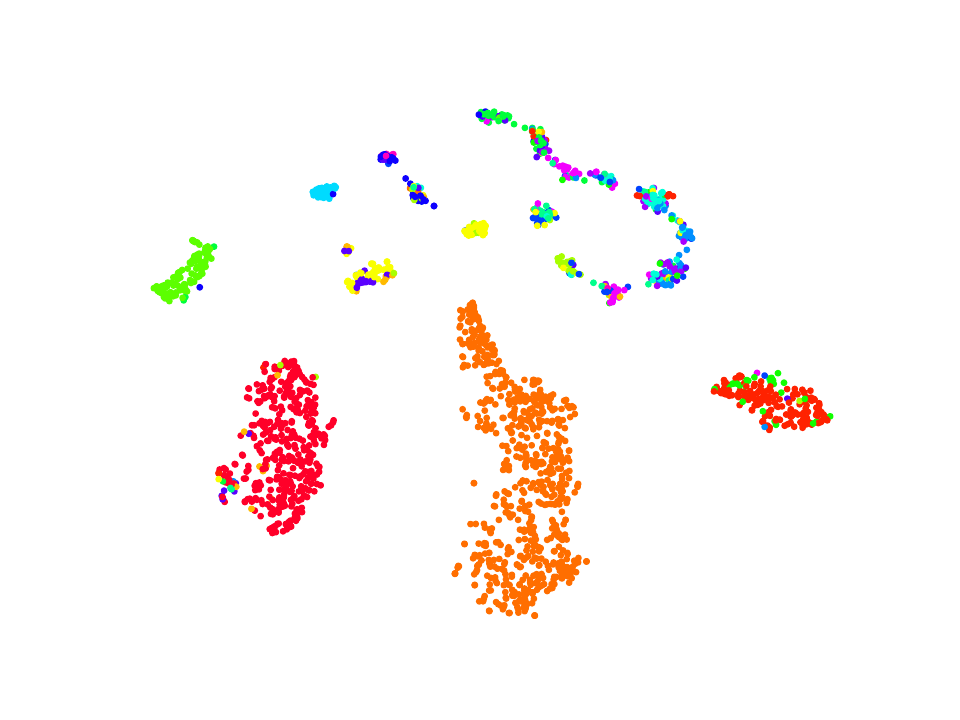}}
\quad
\subfloat[Raw]{\includegraphics[width=0.13\textwidth]{fig/visualization/noisy_0.pdf}}
\subfloat[CVCL]{\includegraphics[width=0.13\textwidth]{fig/visualization/CVCL_3.pdf}}
\subfloat[MFLVC]{\includegraphics[width=0.13\textwidth]{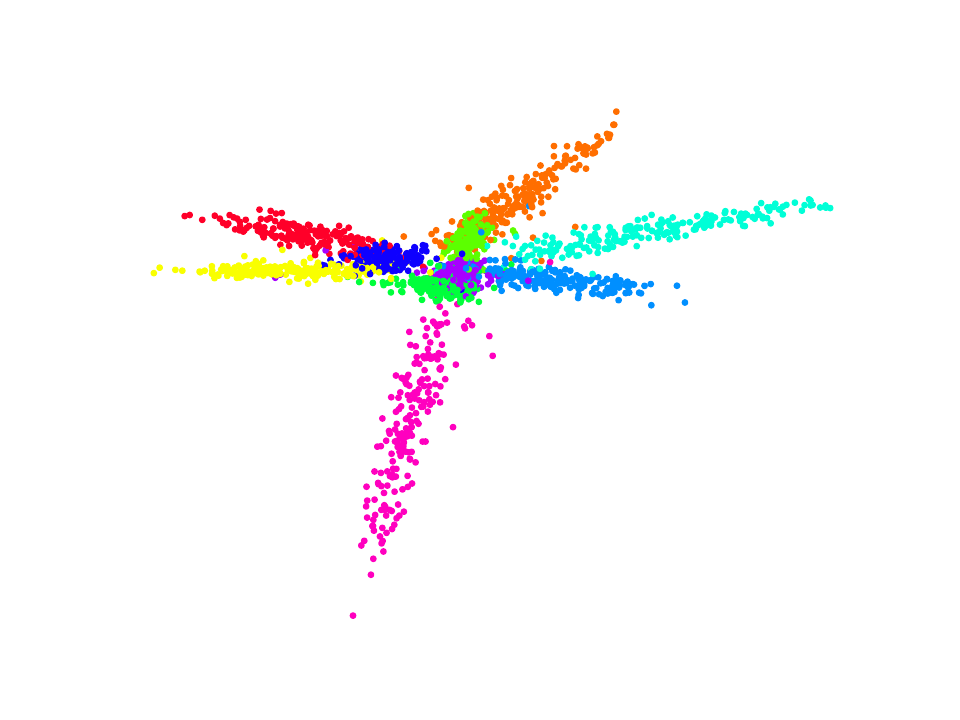}}
\subfloat[DCP]{\includegraphics[width=0.13\textwidth]{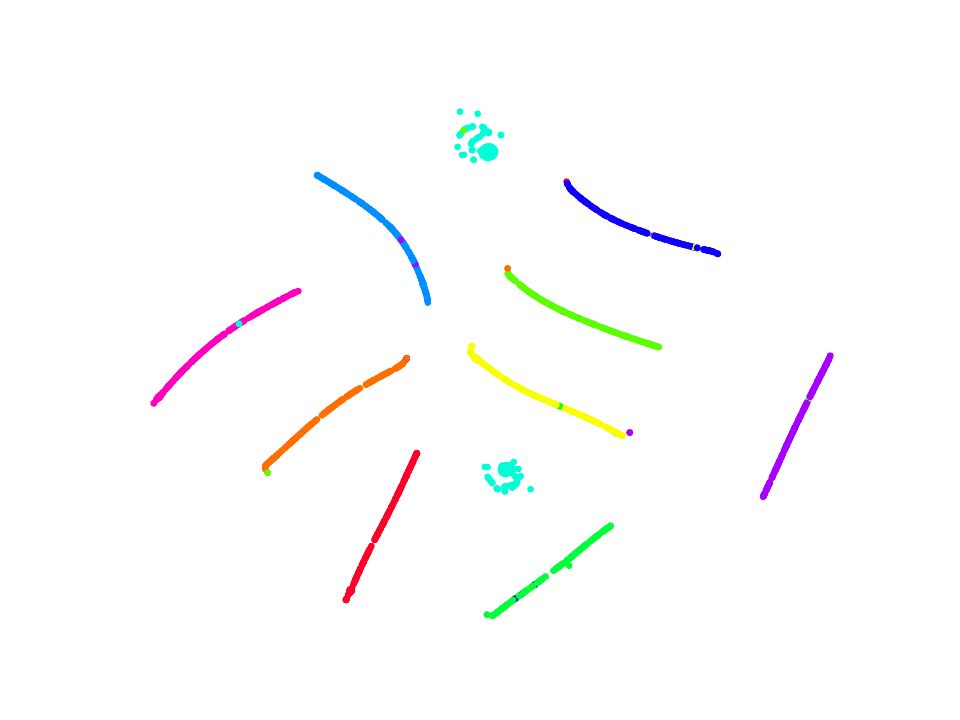}}
\subfloat[DealMVC]
{\includegraphics[width=0.13\textwidth]{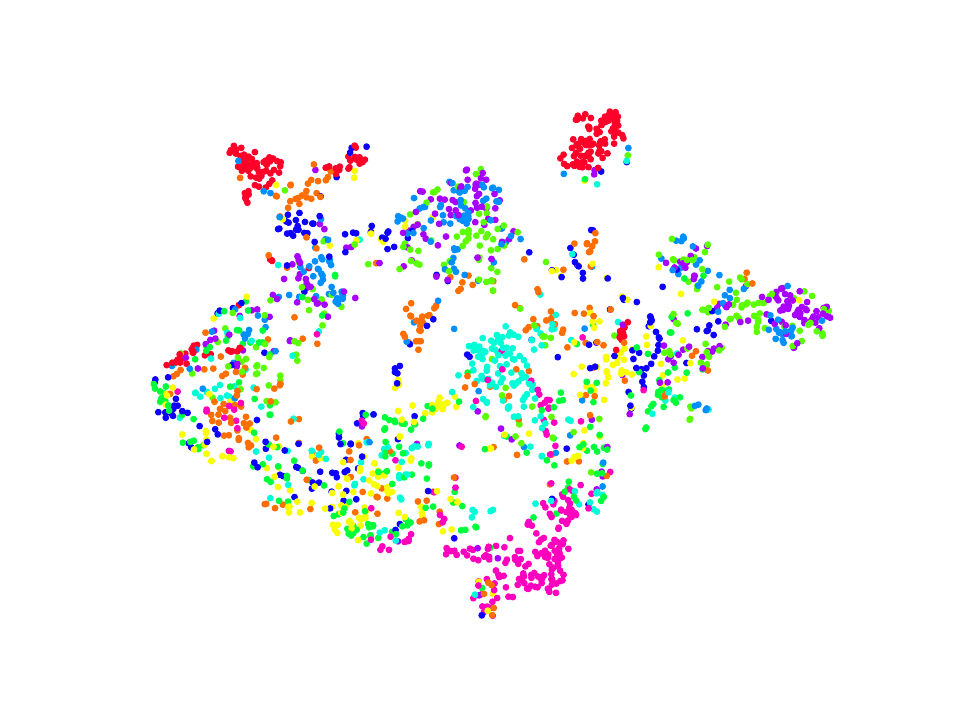}}
\subfloat[SEM]
{\includegraphics[width=0.13\textwidth]{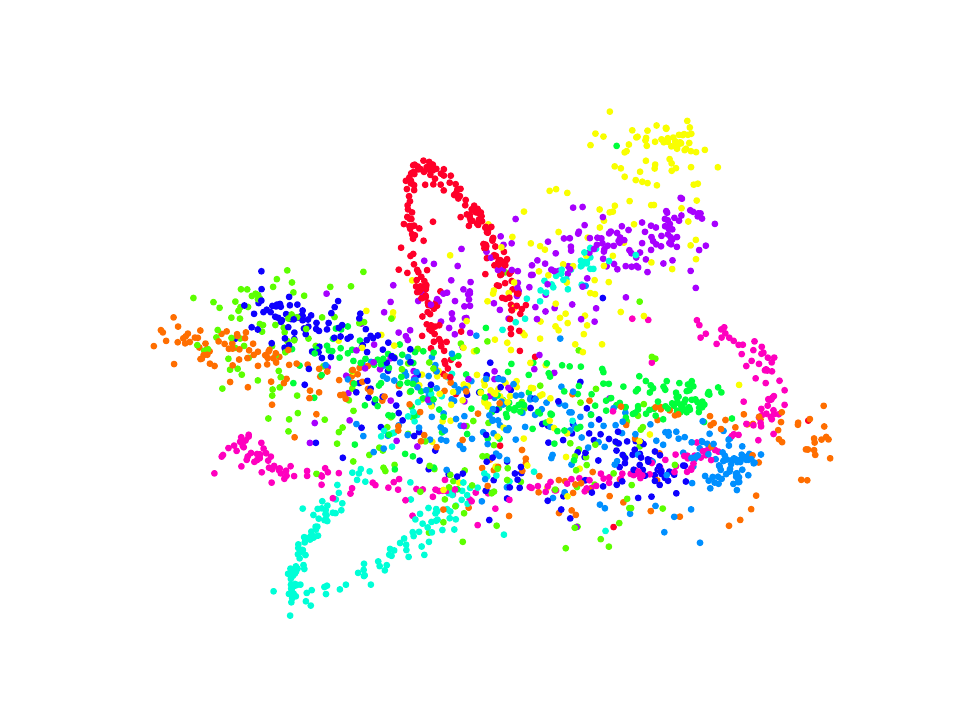}}
\subfloat[MVCAN]
{\includegraphics[width=0.13\textwidth]{fig/visualization/view_fuse_MVCAN3.pdf}}
\subfloat[HCN]{\includegraphics[width=0.15\textwidth]{fig/visualization/noisybest.pdf}}

% \caption{2D visualization on Caltech101-20 and Noisy
% MNIST datasets. The first column on the left  and other columns represent the raw features and representations learned by different methods, including HCN, respectively.}
\caption{Visualizations on Caltech101-20 (first row) and Noisy MNIST (second row) datasets with different methods. }
\label{visul}
\end{figure*}
\subsection{Proof of Theorem 1}
\begin{theorem}\label{the2}
Coding-consensus learning is equivalent to contrastive learning with positive pairs.
\end{theorem}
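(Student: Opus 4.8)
The plan is to expand the symmetric cross-entropy in Eq.~\eqref{ic} and exhibit inside it the two ingredients that define a positive-pair contrastive objective: an alignment (similarity) term that pulls the two view-predictions of the same instance together, and a log-partition term that plays the role of the contrastive normalization. First I would substitute the log-softmax identity $\log y_{ij}^{(1)}=z_{ij}^{(1)}-\log\sum_{j'}\exp(z_{ij'}^{(1)})$ into $(\y^{(2)}_i)^\T\log\y^{(1)}_i$ and use $\sum_j y_{ij}^{(2)}=1$ to pull the partition term outside the class sum, obtaining $(\y^{(2)}_i)^\T\log\y^{(1)}_i=(\y^{(2)}_i)^\T\z^{(1)}_i-\log\sum_{j'}\exp(z_{ij'}^{(1)})$, and symmetrically for the swapped term. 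The negated inner products $-(\y^{(2)}_i)^\T\z^{(1)}_i$ and $-(\y^{(1)}_i)^\T\z^{(2)}_i$ are exactly similarity scores between the two views of instance $i$, whose minimization maximizes agreement on the positive pair, while the log-sum-exp terms are the per-instance normalizers familiar from NT-Xent.

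Next I would reconcile this with the positive-pair-only view emphasized by BYOL and SimSiam, using the complementary decomposition of cross-entropy into entropy plus divergence: $\mathcal L_{\rm code}=\sum_i\bigl(\mathrm{H}(\y^{(1)}_i)+\mathrm{H}(\y^{(2)}_i)+D_{\rm KL}(\y^{(1)}_i\|\y^{(2)}_i)+D_{\rm KL}(\y^{(2)}_i\|\y^{(1)}_i)\bigr)$. The symmetric Kullback--Leibler term vanishes precisely when $\y^{(1)}_i=\y^{(2)}_i$, i.e. when the positive pair is perfectly aligned, so minimizing $\mathcal L_{\rm code}$ drives the two view-predictions of each instance to coincide --- the defining behavior of positive-pair contrastive learning --- while the entropy terms supply the confidence/sharpening regularization that such methods rely on to avoid trivial uniform outputs.

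Finally I would state the equivalence precisely: with $\y$ read as the (softmax) instance representation, the minimizer of $\mathcal L_{\rm code}$ coincides with the maximizer of the positive-pair similarity $\sum_i(\y^{(1)}_i)^\T\y^{(2)}_i$ up to the instance-wise normalizers, and the gradient of each summand splits into an attractive alignment direction plus the softmax normalization, matching the contrastive gradient restricted to positive pairs. The main obstacle I expect is pinning down the exact sense of \emph{equivalent}: since coding consensus carries no explicit negative pairs, the claim cannot be a term-by-term identity with the full InfoNCE objective, so the honest statement is equivalence to the positive-pair branch (the BYOL/SimSiam regime). I would therefore have to argue that the log-partition and entropy terms supply exactly the normalization that prevents representational collapse, so that the two objectives share minimizers and aligned gradients rather than being literally equal.
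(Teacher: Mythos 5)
Your proposal is correct, but it takes a genuinely different route from the paper's own proof. The paper never manipulates the cross-entropy objective $\mathcal L_{\rm code}$ of Eq.~\eqref{ic} at all: it starts from the contrastive loss form of \cite{Hanijcai2024}, in which the similarity of a pair is the Gaussian kernel $\exp(-\|\z_i^{(1)}-\z_i^{(2)}\|_2^2)$, drops the negative-pair term by appeal to BYOL-style positive-only learning, and then simply asserts that the remaining objective $\sum_i\|\z_i^{(1)}-\z_i^{(2)}\|_2^2$ ``measures the coding consensus.'' You instead work directly on the objective as the paper defines it: the log-softmax expansion exhibits the NT-Xent anatomy (a positive-pair alignment term $(\y_i^{(2)})^\T\z_i^{(1)}$ plus a per-instance log-partition normalizer), and the entropy-plus-symmetric-KL decomposition $\mathcal L_{\rm code}=\sum_i\bigl({\rm H}(\y_i^{(1)})+{\rm H}(\y_i^{(2)})+D_{\rm KL}(\y_i^{(1)}\|\y_i^{(2)})+D_{\rm KL}(\y_i^{(2)}\|\y_i^{(1)})\bigr)$ shows that minimization drives $\y_i^{(1)}=\y_i^{(2)}$, which is exactly the positive-pair alignment behavior. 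Your route buys what the paper's proof leaves implicit: the paper's argument stops at the squared-distance loss and never connects it back to the cross-entropy form of $\mathcal L_{\rm code}$, whereas you pin down the sense in which the two objectives are ``equivalent'' (shared minimizers and matching attractive gradient components with the positive-pair branch), which is the honest content of this informal theorem. What the paper's version buys in exchange is brevity and a direct tie to the contrastive formulations it cites.

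One correction to your final step: the claim that the log-partition and entropy terms ``prevent representational collapse'' is not right as stated. In your own decomposition those entropy terms are being \emph{minimized}, so they sharpen per-instance predictions and rule out the trivial uniform-output solution, but they do nothing to stop the degenerate solution in which every sample receives the same one-hot prediction. In HCN that collapse mode is excluded by the classifying-consensus regularizer --- the marginal-entropy maximization of Eq.~\eqref{infor2}, i.e.\ the $\beta,\gamma$ terms of Eq.~\eqref{ccloss} --- and in BYOL/SimSiam it is excluded by architectural asymmetry, not by anything inside $\mathcal L_{\rm code}$. Since the theorem claims only equivalence with the positive-pair objective, dropping or reattributing that anti-collapse remark leaves your argument intact.
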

\begin{proof}
Considering the form of contrastive learning in~\cite{Hanijcai2024} as follows:
\begin{equation}
\begin{aligned}
\mathcal{L}_{\mathrm{cl}}=& -\frac{1}{|\mathcal{P}|}\sum_{(i,i)\in\mathcal{P}}\log s(\bm{z}_{i},\bm{z}_{i})  \\
&-\frac{1}{|\mathcal{N}|}\sum_{(i,j)\in\mathcal{N}}\log(1-s(\bm{z}_{i},\bm{z}_{j})), 
\end{aligned}
\tag{B2}
\end{equation}
where $\mathcal{P}$ and $\mathcal{N}$ represent the sets of positive and negative pairs, respectively, and $s$ denotes the similarity score for paired features as follows:
\begin{equation}\label{similarity}
s(\bm{z}_{i},\bm{z}_{i})=\exp\left(-\|\bm{z}_{i}-\bm{z}_{i}\|_{2}^{2}\right). \tag{B3}
\end{equation}

From a different perspective, the error between two-view features is governed by a Gaussian distribution due to the central limit theorem~\cite{BishopPRML}. Inspired by the work of~\cite{grill2020bootstrap}, the form of contrastive learning with positive is as follows,
\begin{equation}\label{cons2cl}
\mathcal{L}_{\mathrm{cl}}=\sum_{i}\|\bm{z}_{i}-\bm{z}_{i}\|_{2}^{2}, \tag{B4}
\end{equation}
Eq.~\eqref{cons2cl} measures the coding consensus between two different views, indicating that coding consensus learning is equivalent to contrastive learning with positive pairs.
\end{proof}
\subsection{Computational Complexity}
Let $b$, $h$, $l$, $n$ and $k$ denote the mini-batch size, the maximum number of hidden layers, the layer number, the number of samples, and the feature dimension, respectively. The complexity of forward propagation is:
\begin{equation*}
     \mathcal{O}(2nn_vd_vh^{(l+1)} ).
\end{equation*}
The complexities of reconstruction loss, classifying consensus loss, coding consensus loss and global consensus loss are as follows: 
\begin{equation*}
\begin{aligned}
     &\mathcal{O}(2nn_vd_v),\\
     &\mathcal{O}(n_v(n_v-1)(k^2b^2+3k^2)/2),\\
     &\mathcal{O}(n_vbk),\\
     &\mathcal{O}(n_v(n_v-1)bk).
\end{aligned}
\end{equation*}
The overall complexity is:
$\mathcal{O}(b_nn_v^2k^2b^2+2nn_vd_vh^{(l+1)})$, 
where $b_n=n/b$ is the maximum number of iterations. We obtain the final complexity:
\begin{equation*}
    \mathcal{O}(nn_v^2k^2b+2nn_vd_vh^{(l+1)}).
\end{equation*}

\begin{table*}[!t]
\centering
\caption{Hyperparameter of the two views clustering.}
% \begin{tabular}{llllllrl}
\begin{tabular*}{0.88\textwidth}{@{\extracolsep{\fill}}l|lllllrl}
\toprule
Datasets  &$\alpha$& $\beta$ & $\gamma$ & $\lambda_1$   & $\lambda_2$ & $D_{\rm out}$ & $\rho$ \\ 
\midrule
Caltech101-20     &    3     &    3       &   8  &   0.1   &    0.1   &   128   &  0.10   \\ 
Scene-15 &     3.8    &      2.7     &   2.2  &   0.01   &  1     &    128  & 0.08  \\ 
LandUse-21  &    3     &       3.6    &  9.5   &   0.01   &    5   &  64    &  0.08 \\ 
Noisy MNIST    &    3     &      3     &   8  &   0.3   &   0.01    &  64    &  0.10  \\ 
\bottomrule
\end{tabular*}
\label{hyperpara}
\end{table*}

\begin{figure*}[!t]
\centering
\includegraphics[width=0.80\textwidth]{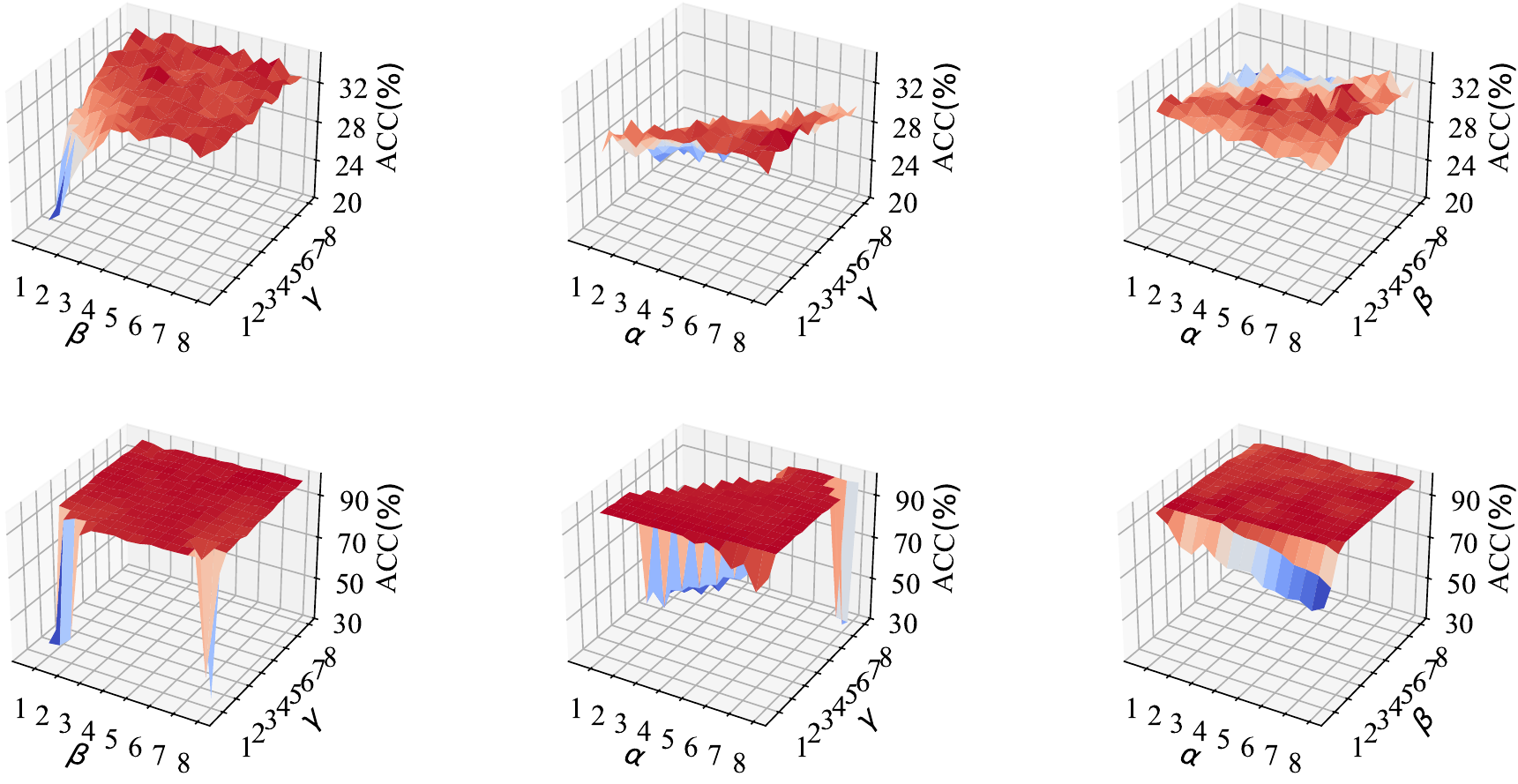}
\caption{Parameter sensitivity of $\alpha$, $\beta$ and $\gamma$ on LandUse-21 (first row) and Noisy MNIST (second row) datasets.}
\label{clspara}
\end{figure*}
\section{Experimental details}
\subsection{Hyperparameters Settings}
We implement the encoder with a four-layer MLP followed by a softmax layer to obtain the class probability. The decoders have the same architecture as the encoders. Specifically, the dimensions of each layer on the encoders are [$d_v$, 1024, 1024, 1024, $D_{\rm out}$], where $d_v$ is the input data dimension of the $v$-th view and $D_{\rm out}$ represents the latent representation dimension for different datasets.
% the corresponding feature space for different datasets, 
In this section, we list the hyperparameters utilized in our multiview clustering model for each dataset as shown in Table~\ref{hyperpara}.
\subsection{Ablation Study}
To validate the importance of each component in HCN, we conducted ablation studies on the four datasets by discarding each component. The results are shown in Table~\ref{Ablation}, where DA means we apply classifying consensus objective without data augmentation. The ablation study results demonstrate all components of our HCN improve clustering performance.
% \begin{table}[!t]
%     \centering
%     \begin{tabular}{ccccccc}
% \toprule 

% \multirow{2}{*}{Method}  & \multicolumn{3}{c}{LandUse-21 }& \multicolumn{3}{c}{ Noisy MNIST }\\
%  \cmidrule{2-7}
% \multicolumn{1}{c}{}&ACC&NMI &ARI & ACC &   NMI &ARI   \\ 
% \midrule
% HCN w/o $\mathcal{L}_{\rm rec}$&32.27 & 37.82&17.23 &   97.93  &    94.60  &95.51  \\
% HCN w/o $\mathcal{L}_{\rm cls}$ & 24.57   &  26.22 &   10.11   &       25.62   &  24.58    & 9.79 \\
% HCN w/o $\mathcal{L}_{\rm glb}$& 32.21& 38.02 &  17.73 &97.30 &  93.41 & 94.20   \\
% HCN w/o $\mathcal{L}_{\rm code}$&  32.62  &    38.19  &  17.79  & 97.82  &     94.44& 95.29\\
% HCN &\textbf{32.81} &  \textbf{38.58} &  \textbf{17.86}  &  \textbf{98.07}     &  \textbf{94.83}     &   \textbf{95.79}      \\
% \bottomrule
% \end{tabular}
% \caption{Ablation study of each loss component in the proposed method on all the datasets. 
% }
% \label{Ablation}
% \end{table}
\subsection{Visualization Analysis}
In this part, we provide the $t$-SNE visualizations of our method and seven baselines. Without loss of generality, we present the visualizations of Caltech101-20 and Noisy MNIST. As shown in Fig.~\ref{visul}, our method reveals more discriminative features among different classes than other baselines.
% \subsection{Analysis of Hyperparameters}
\subsection{Analysis of Hyperparameters $\alpha$, $\beta$ and $\gamma$}
In this part, we provide an analysis of parameters $\alpha$, $\beta$, and $\gamma$ with two views setting on LandUse-21 and Noisy MNIST datasets in Fig~\ref{clspara}. We fix a parameter as same as multiview clustering experiments, and the other two vary in a range of [1,8] with 0.5 intervals. The results show that our model is relatively stable with most combinations of trade-off hyperparameters of our classifying consensus learning objective.
\end{document}